\newcommand{\refsec}[1]{Section \ref{#1}}
\newcommand{\reffig}[1]{Figure \ref{#1}}
\newcommand{\reftab}[1]{Table \ref{#1}}
\newcommand{\reflem}[1]{Lemma \ref{#1}}
\newcommand{\refthm}[1]{Theorem \ref{#1}}
\theoremstyle{plain}
\newtheorem{thm}{Theorem}
\newtheorem{pro}{proof}
\newtheorem{rem}[pro]{Remark}
\newtheorem{lem}{Lemma}
\newtheorem{defn}{Definition}
\title{Learning from Label Proportions with\\ Generative Adversarial Networks}
\author{\small Jiabin Liu\thanks{Joint first authors with equal contribution}\\
\small University of Chinese Academy of Sciences\\
\small Beijing 100190, China\\
\small \texttt{liujiabin008@126.com}
\And \small Bo Wang\footnotemark[1]\\
\small University of International Business and Economics\\
\small Beijing 100029, China\\
\small \texttt{wangbo@uibe.edu.cn}
\AND \small Zhiquan Qi\thanks{Corresponding author} \qquad \qquad \small Yingjie Tian \qquad \qquad \small Yong Shi\\
\small University of Chinese Academy of Sciences\\
\small Beijing 100190, China\\
\small \texttt{qizhiquan@foxmail.com, \{tyj,yshi\}@ucas.ac.cn}
}
\begin{document}

\maketitle

\begin{abstract}
  In this paper, we leverage generative adversarial networks (GANs) to derive an effective algorithm LLP-GAN for learning from label proportions (LLP), where only the bag-level proportional information in labels is available. Endowed with end-to-end structure, LLP-GAN performs approximation in the light of an adversarial learning mechanism, without imposing restricted assumptions on distribution. Accordingly, we can directly induce the final instance-level classifier upon the discriminator. Under mild assumptions, we give the explicit generative representation and prove the global optimality for LLP-GAN. Additionally, compared with existing methods, our work empowers LLP solver with capable scalability inheriting from deep models. Several experiments on benchmark datasets demonstrate vivid advantages of the proposed approach.
\end{abstract}

\section{Introduction}\label{section1}
Deep learning benefits from \emph{end-to-end} design philosophy, which emphasizes minimal a priori representational and computational assumption, and greatly avoids explicit structure and ``hand-engineering'' \cite{battaglia2018relational}.
Doubtless, most of its achievements are affirmed by the access to the abundance of fully supervised data \cite{Hinton2012Deep,he2016deep,Redmon2016You}. One reason is that a large amount of complete data can alleviate the over-fitting problem without deteriorating the hypothesis set complexity (e.g., a sophisticated network design with vast parameters).

Unfortunately, fully labeled data is not always handy to utilize.
Firstly, it is infeasible or labor-intensive to obtain abundant accurate labeled data \cite{Wang2013Multi}.
Secondly, labels are not accessible under certain circumstances, such as privacy constraints \cite{qi2017learning}. Therefore, the community begins to pay attention to \emph{weakly supervised learning} (a.k.a., \emph{weakly labeled learning}, WeLL), concerning any corrosion in supervised information. For example, semi-supervised learning (SSL) is a WeLL problem by concealing most of the labels in the training stage. Another widespread WeLL problem is multi-instance learning (MIL) \cite{maron1998framework}, which is also a representative in \emph{learning with bags}.

\begin{figure}[ht]
  \centering
  \includegraphics[width=0.86\textwidth]{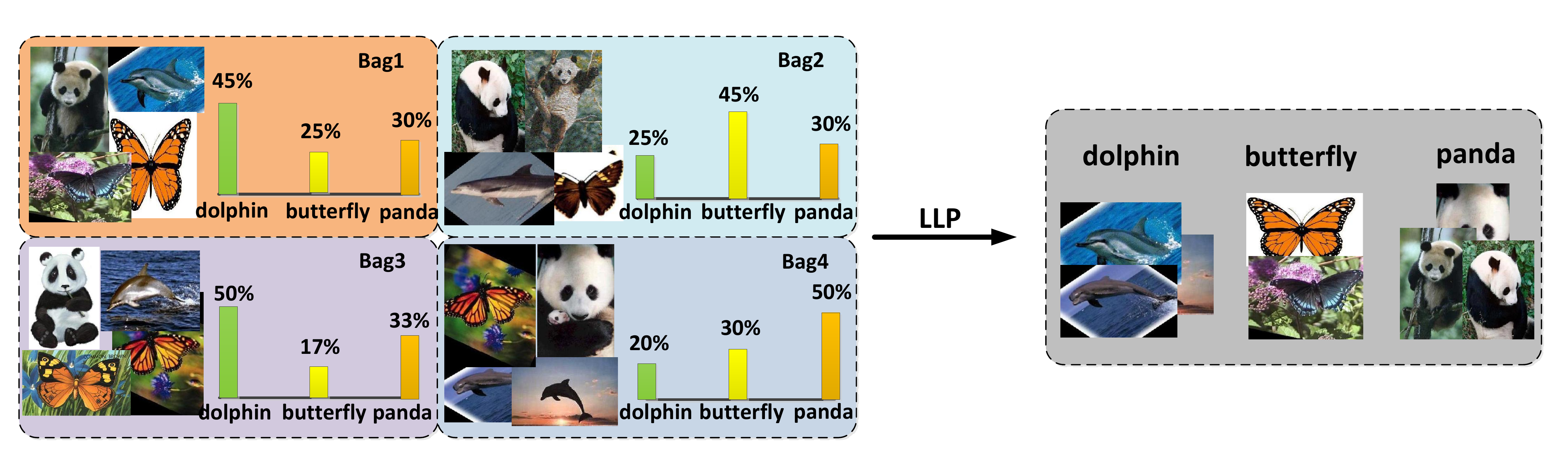}\\
  \caption{An illustration of multi-class learning from label proportions. In detail, the data belongs to three categories and is partitioned into four non-overlapping groups. In each group, the sizes of green, blue, and orange rectangles respectively denote available label proportions in different categories. We only know the sample feature information and class proportions in every group.}
  \label{fig001}
\end{figure}

Furthermore, generative adversarial networks (GANs) \cite{goodfellow2014generative}, which is originally proposed to synthesize high fidelity data in line with the estimation of underlying data distribution, can be potentially applied to WeLL. For instance, GANs can deal with $K$-class SSL \cite{salimans2016improved}, by treating the generated data as class $K\!+\!1$ and exploiting feature matching (FM) as the generator objective. Unlike performing maximizing log-likelihood on the variational lower bound of unlabeled data \cite{kingma2013auto,kingma2014semi}, GANs seeks the equilibrium between two networks (discriminator and generator) by alternatively upgrading in an adversarial game, and directly obtains the final classifier upon the discriminator. 

In this paper, we push the envelope further by focusing on applying GANs to another WeLL problem: learning from label proportions (LLP) (see \cite{quadrianto2009estimating,rueping2010svm,Yu2014Modeling} for real-life applications). We illustrate multi-class LLP problem in \reffig{fig001}. By referring \emph{group} as \emph{bag}, LLP also fits for \emph{learning with bags} settings, which is primarily established in MIL \cite{dietterich1997solving}. In LLP, we strive for an instance-level multi-class classifier merely with \emph{multi-bag} proportional information and instance features (inputs).
On the right, instances from different categories are classified based on a well-trained multi-class classifier.

The main challenge for LLP is to shrink the uncertainty in label inference based on the bag-level proportional information.
Before deep learning making its appearance, several shallow methods have been proposed, such as probability estimation methods (e.g., MeanMap \cite{quadrianto2009estimating} and Laplacian MeanMap \cite{patrini2014almost}) and SVM-based methods (e.g., InvCal \cite{rueping2010svm} and alter-$\propto$SVM \cite{yu2013svm,qi2017learning}). However, statistical approaches are extremely constrained by strict assumption on data distribution and prior knowledge, while the SVM-based methods suffer from the NP-hard combinatorial optimization issue, thus is lack of scalability.

The motivation of our work mainly lies in the following three aspects. Firstly, as introduced above, GAN is an elegant recipe for solving WeLL problems, especially SSL \cite{salimans2016improved}. From this viewpoint, our approach is in line with the idea of applying GAN to incomplete label scenarios. More importantly, the success of generative models for WeLL stems from the explicit or implicit representation learning, which has been an essential method for unsupervised learning for a long time \cite{bengio2013representation,radford2015unsupervised}, e.g., VAE \cite{kingma2013auto}. In our approach, the convolution layers in discriminator can perform as a feature extractor for downstream tasks, which is proved to be efficient \cite{radford2015unsupervised}. Hence, our work can be regarded as \emph{solving LLP based on representation learning with GANs}. In this scheme, generated fake samples encourage the discriminator to not only detect the difference between the real and the fake instances, but also distinguish \emph{true $K$ classes} for real samples (through $K+1$ classifier). Thirdly, most LLP methods assume that the bags are i.i.d. \cite{quadrianto2009estimating,yu2013svm}, which cannot sufficiently explore the underlying distribution in the data and may be contradicted in certain applications. Instead, the generator in LLP-GAN is designated to learn data distributions through the adversarial scheme without this assumption.

The remainder of this paper is organized as follows:
\begin{itemize}
\item In \refsec{pre}, we give preliminaries regarding LLP problem and propose a simple improvement based on entropy regularization for the existing deep LLP solver.
\item In \refsec{adv}, we describe our adversarial learning framework for LLP, especially the lower bound of discriminator. In particular, we reveal the relationship between prior class proportions and posterior class likelihoods. More importantly, we offer a  decomposition representation of the class likelihood with respect to the prior class proportions, which verifies the existence of the final classifier.
\item In \refsec{exp}, we empirically show that our method can achieve SOTA performance on large-scale LLP problems with a low computational complexity.
\end{itemize}

\section{Preliminaries}\label{pre}
This section offers necessary preliminaries for our approach, including the formal problem setting and related work with simple extensions.
\subsection{The Multi-class LLP}
Before further discussion, we formally describe multi-class LLP. For simplicity, we assume that all the bags are disjoint and let $\mathcal{B}_i\!=\!\{\mathbf{x}_i^1,\mathbf{x}_i^2,\!\cdots\!,\mathbf{x}_i^{N_i}\},  i = 1,2,\!\cdots\!,n$ be the bags in training set. Then, training data is $\mathcal{D}\!=\!\mathcal{B}_1 \cup \mathcal{B}_2 \cup\!\cdots\!\cup \mathcal{B}_n, \mathcal{B}_i \cap \mathcal{B}_j\!=\!\emptyset,\forall i \neq j$, where the total number of bags is $n$.

Assuming we have $K$ classes, for $\mathcal{B}_i$, let $\mathbf{p}_i$ be a $K$-element vector, where the $k^{th}$ element $p_i^k$ is the proportion of instances belonging to the class $k$, with the constraint $\sum_{k=1}^K p_i^k\! =\! 1$, i.e.,
\begin{align}\label{a2}
\begin{split}
 p_i^k := \frac{|\{ j\!\in\! [1:N_i]|\mathbf{x}_i^j\!\in\! \mathcal{B}_i, y_i^{j*}\! =\! k\}|}{|\mathcal{B}_i|}. 
\end{split}
\end{align}
Here, $[1\!:\!N_i]\!=\!\{1,2,\!\cdots\!,N_i\}$ and $y_i^{j*}$ is the unaccessible ground-truth instance-level label of $\mathbf{x}_i^j$. In this way, we can denote the available training data as $\mathcal{L}\!=\!\{(\mathcal{B}_i,\mathbf{p}_i)\}_{i=1}^n$. The goal of LLP is to learn an instance-level classifier based on $\mathcal{L}$.

\subsection{Deep LLP Approach}
In terms of deep learning, DLLP firstly leverages DNNs to solve multi-class LLP problem \cite{ardehaly2017co}. Using DNN's probabilistic classification outputs, it is straightforward to adapt cross-entropy loss into a bag-level version by averaging the probability outputs in every bag as the proportion estimation. To this end, inspired by \cite{Wang2013Multi}, DLLP reshapes standard cross-entropy loss by substituting instance-level label with label proportion, in order to meet the requirement of proportion consistency.

In detail, suppose that $\mathbf{\tilde{p}}_i^j\!=\!p_\theta(\mathbf{y}|\mathbf{x}_i^j)$ is the vector-valued DNNs output for $\mathbf{x}_i^j$, where $\theta$ is the network parameter. Let $\oplus$ be element-wise summation operator. Then, the bag-level label proportion in the $i^{th}$ bag is obtained by incorporating the element-wise posterior probability:
\begin{equation}\label{eq1}
\mathbf{\overline{p}}_i = \frac{1}{N_i}\bigoplus_{j=1}^{N_i} \mathbf{\tilde{p}}_i^j=\frac{1}{N_i}\bigoplus_{j=1}^{N_i}p_\theta(\mathbf{y}|\mathbf{x}_i^j).
\end{equation}
Different from the discriminant approaches, in order to smooth \emph{max} function \cite{PRML},  $\mathbf{\tilde{p}}_i^j$ is in a vector-type softmax manner to produce the probability distribution for classification.
Taking \emph{log} as element-wise logarithmic operator, the objective of DLLP can be intuitively formulated using cross-entropy loss $L_{prop}\!=\!-\!\sum_{i=1}^{n} \mathbf{p}_i^\intercal log(\mathbf{\overline{p}}_i)$. It penalizes the difference between prior and posterior probabilities in bag-level, and commonly exists in GAN-based SSL \cite{springenberg2015unsupervised}.

\subsection{Entropy Regularization for DLLP}
Following the entropy regularization strategy \cite{grandvalet2005semi}, we can introduce an extra loss $E_{in}$ with a trade-off hyperparameter $\lambda$ to constrain instance-level output distribution in a low entropy accordingly:
\begin{equation}\label{eq3}
L = L_{prop}+\lambda E_{in} = -\sum_{i =1}^{n} \mathbf{p}_i^\intercal log(\mathbf{\overline{p}}_i) -\lambda\sum_{i=1}^{n}\sum_{j=1}^{N_i} (\mathbf{\tilde{p}}_i^j)^\intercal log(\mathbf{\tilde{p}}_i^j).
\end{equation}
This straightforward extension of DLLP is similar to a
KL divergence, taking care of bag-level and instance-level
consistencies simultaneously.
It takes advantage of DNN's output distribution to cater to the label proportions requirement, as well as minimizing output entropy as a regularization term to guarantee high true-fake belief. This is believed to be linked with an inherent maximum a posteriori (MAP) estimation \cite{PRML} with certain prior distribution in network parameters. However, we will not look at the performance of this extension and consider not to include it
as a baseline, because the experimental results empirically
suggest that the original DLLP has already converged to the
solution with fairly low instance-level entropy, which makes
the proposed regularization term redundant. We offer results
of this empirical study in the Appendix.

\section{Adversarial Learning for LLP}\label{adv}
In this section, we focus on LLP based on adversarial learning and propose LLP-GAN, which devotes GANs to harnessing LLP problem.

We illustrate the LLP-GAN framework in \reffig{fig02}.
Firstly, the generator is employed to generate image with input noise, which is labeled as fake, and the discriminator yields class confidence maps for each class (including the fake one) by taking both fake and real data as its inputs. This results in the \emph{adversarial loss}. Secondly, we incorporate the proportions by adding the \emph{cross entropy loss}.
\begin{figure}[ht]
  \centering
  \includegraphics[width=0.71\textwidth]{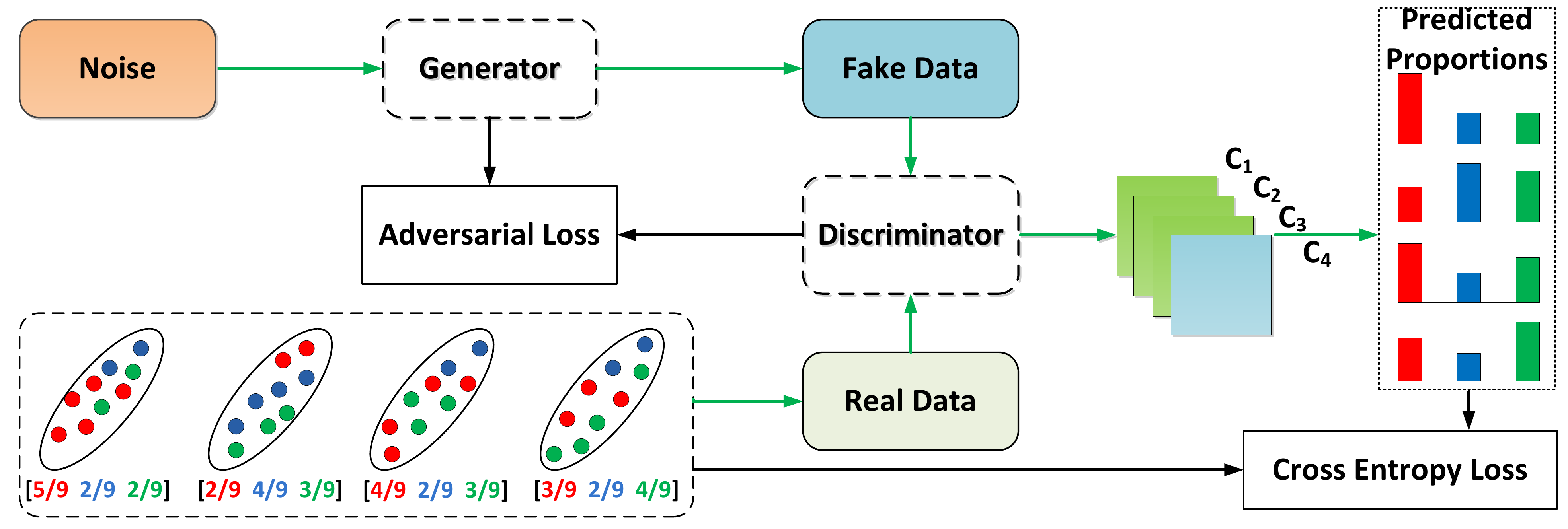}\\
  \caption{An illustration of our LLP-GAN framework.
 }\label{fig02}
\end{figure}
\subsection{The Objective Function of Discriminator}
In LLP-GAN, our discriminator is not only to identify whether a sample is from the real data or not, but also to elaborately distinguish each real input's label assignment as a $K$ classes classifier. We incorporate the unsupervised adversarial learning into the $L_{unsup}$ term.

Next, the main issue becomes how to exploit the proportional information to guide this unsupervised learning correctly. To this end, we replace the supervised information in semi-supervised GANs with label proportions, resulting in $L_{sup}$, same as $L_{prop}$ in \eqref{eq3}.

\begin{defn}\label{defn}
Suppose that $\mathcal{P}$ is a partition to divide the data space into $n$ disjoint sections. Let $p_d^i(\mathbf{x}), i\!=\!1,2,\cdots,n$ be marginal distributions with respect to elements in $\mathcal{P}$ respectively. Accordingly, $n$ bags in LLP training data spring from sampling upon $p_d^i(\mathbf{x}), i\!=\!1,2,\cdots,n$. In the meantime, let $p(\mathbf{x},y)$ be the unknown holistic joint distribution.
\end{defn}

We normalize the first $K$ classes in $P_D(\cdot|\mathbf{x})$ into the instance-level posterior probability $\tilde{p}_{D}(\cdot|\mathbf{x})$ and compute $\mathbf{\overline{p}}$ based on \eqref{eq1}. Then, the \emph{ideal} optimization problem for the discriminator of LLP-GAN is:
\begin{equation}\label{eq4}
\begin{split}
\max_{D} \ \ & V(G,D) \! = \! L_{unsup}\! +\! L_{sup}\! =\! L_{real}\! + \!L_{fake}\! -\! \lambda CE_{\mathcal{L}}(\mathbf{p},\mathbf{\overline{p}})\\
& =\! \sum_{i=1}^n E_{\mathbf{x} \sim p_d^i}\Big [logP_D(y\!\leq\! K|\mathbf{x})\!\Big]\! +\! E_{\mathbf{x} \sim p_g}\Big [logP_D(K\!+\!1|\mathbf{x})\!\Big]\!+\!\lambda \sum_{i=1}^n\mathbf{p}_i^\intercal log(\mathbf{\overline{p}}_i).\\
\end{split}
\end{equation}
Here, $p_g(\mathbf{x})$ is the distribution of the synthesized data.
\begin{rem}
When $P_D(K\!+\!1|\mathbf{x})\!\neq\!1$, the normalized instance-level posterior probability $\tilde{p}_{D}(\cdot|\mathbf{x})$ is:
\begin{equation}\label{eq8}
\tilde{p}_{D}(k|\mathbf{x})\!=\!\frac{P_D(k|\mathbf{x})}{1\!-\!P_D(K\!+\!1|\mathbf{x})}, k=1,2,\cdots,K.
\end{equation}
If $P_D(K\!+\!1|\mathbf{x})\!=\!1$, let $\tilde{p}_{D}(k|\mathbf{x})\!=\!\frac{1}{K}, k=1,2,\cdots,K$. Note that weight $\lambda$ in \eqref{eq4} is added to balance between supervised and unsupervised terms, which is a slight revision of SSL with GANs \cite{salimans2016improved,dai2017good}. Intuitively, we reckon that the proportional information is too weak to fulfill supervised learning pursuit. Hence, a relatively large weight should be preferable in the experiments. However, large $\lambda$ may result in unstable GANs training. For simplicity, we fix $\lambda\!=\!1$ in the following theoretical analysis on discriminator.
\end{rem}

Aside from identifying the first two terms in \eqref{eq4} as that in semi-supervised GANs, the cross-entropy term harnesses the label proportions consistency. In order to justify the non-triviality of this loss, we first look at its lower bound. More importantly, it is easier to perform the gradient method on the lower bound, because it swaps the order of \emph{log} and the summation operation. For brevity, the analysis will be done in a non-parametric setting, i.e., we assume that both $D$ and $G$ have infinite capacity.

\begin{rem}[\textbf{The Lower Bound Approximation}]
Let $p_i(k)$ be the class $k$ proportion in the $i^{th}$ bag.
According to the idea of sampling methods and Jensen's inequality, we have:
\begin{equation}\label{eq2}
\begin{split}
-CE_{\mathcal{L}}(\mathbf{p},\mathbf{\overline{p}}) & = \sum_{i=1}^n\!\sum_{k=1}^K\!p_i(k)log \Big [\frac{1}{N_i} \!\sum_{j=1}^{N_i}\!\tilde{p}_{D}(k|\mathbf{x}_i^j)\Big ]\\
&\backsimeq \! \sum_{i=1}^n\!\sum_{k=1}^K\!p_i(k)log \Big [\!\int\! p_d^i(\mathbf{x}) \tilde{p}_{D}(k|\mathbf{x}) \text{d}\mathbf{x}\Big ]\!\geqslant\! \sum_{i=1}^n\!\sum_{k=1}^K\!p_i(k)E_{\mathbf{x}\sim p_d^i}\!\Big [log \tilde{p}_{D}(k|\mathbf{x})\!\Big ].\\
\end{split}
\end{equation}
The expectation in the last term can be approximated by sampling. Similar to EM mechanism \cite{moon1996expectation} for mixture models, by approximating $-CE_{\mathcal{L}}(\mathbf{p},\mathbf{\overline{p}})$ with its lower bound, we can perform gradient ascend independently on every sample. Hence, SGD can be applied.
\end{rem}
As shown in \eqref{eq2}, in order to facilitate the gradient computation, we substitute cross entropy in \eqref{eq4} by its lower bound and denote this approximate objective function for discriminator by $\widetilde{V}(G,D)$.
\subsection{The Optimal Discriminator and LLP Classifier}
Now, we give the optimal discriminator and the final classifier for LLP based on the analysis of $\widetilde{V}(G,D)$. Firstly, we have the following result of the lower bound in \eqref{eq2}.
\begin{lem}\label{prop2}
The maximization on the lower bound in \eqref{eq2} induces an optimal discriminator $D^*$ with a posterior distribution $\tilde{p}_{D^*}(y|\mathbf{x})$, which is consistent with the prior distribution $p_i(y)$ in each bag.
\end{lem}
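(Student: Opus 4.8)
The plan is to follow the classical optimal-discriminator argument of Goodfellow et al., adapted to the $(K\!+\!1)$-class, bag-wise setting. The first move is a change of variables that decouples the ``real-versus-fake'' decision from the class assignment. Writing $D(\mathbf{x}) := P_D(y\!\leq\! K|\mathbf{x}) = 1 - P_D(K\!+\!1|\mathbf{x})$, equation \eqref{eq8} gives $P_D(k|\mathbf{x}) = D(\mathbf{x})\,\tilde{p}_{D}(k|\mathbf{x})$ for $k=1,\ldots,K$, so the pair $\big(D(\mathbf{x}),\,\tilde{p}_{D}(\cdot|\mathbf{x})\big)$ — with $D(\mathbf{x})\in[0,1]$ and $\tilde{p}_{D}(\cdot|\mathbf{x})$ ranging over the $K$-simplex — freely parametrizes the discriminator's full output simplex at each $\mathbf{x}$. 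Substituting this into $\widetilde{V}(G,D)$ (obtained from \eqref{eq4} by replacing the cross-entropy term with its lower bound from \eqref{eq2}, with $\lambda=1$), I would rewrite the objective as a sum of integrals whose integrands split into disjoint groups of variables.

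The decoupling is the organizing idea. The terms $L_{real}+L_{fake}$ depend on $\mathbf{x}$ only through $D(\mathbf{x})$, whereas the supervised lower bound $\sum_i \int p_d^i(\mathbf{x})\sum_k p_i(k)\log\tilde{p}_{D}(k|\mathbf{x})\,\mathrm{d}\mathbf{x}$ depends only on $\tilde{p}_{D}(\cdot|\mathbf{x})$. Because $D$ and $\tilde{p}_{D}$ are independent degrees of freedom, and because we work in the non-parametric regime assumed in the text preceding \eqref{eq2}, each integral can be maximized pointwise in $\mathbf{x}$ and independently of the other.

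The key step is the pointwise maximization of the supervised integrand. Fixing $\mathbf{x}$ in the $i$-th section of the partition $\mathcal{P}$ (so $p_d^i(\mathbf{x})>0$), I maximize $\sum_{k=1}^K p_i(k)\log\tilde{p}_{D}(k|\mathbf{x})$ subject to $\sum_k \tilde{p}_{D}(k|\mathbf{x})=1$. Using the identity $\sum_k p_i(k)\log\tilde{p}_{D}(k|\mathbf{x}) = -H(p_i) - \mathrm{KL}\big(p_i\,\|\,\tilde{p}_{D}(\cdot|\mathbf{x})\big)$ together with the non-negativity of the KL divergence (equivalently, a Lagrange-multiplier computation or Gibbs' inequality), the unique maximizer is $\tilde{p}_{D^*}(k|\mathbf{x}) = p_i(k)$. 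Holding for $p_d^i$-almost every $\mathbf{x}$, this is exactly the asserted consistency of the optimal posterior with the bag-level prior $p_i(y)$. For completeness I would also record that maximizing the remaining $L_{real}+L_{fake}$ yields the usual $D^*(\mathbf{x}) = p_d(\mathbf{x})/\big(p_d(\mathbf{x})+p_g(\mathbf{x})\big)$, where $p_d$ denotes the density agreeing with $p_d^i$ on the $i$-th section; in particular $D^*(\mathbf{x})>0$ on the real support, so the normalization in \eqref{eq8} is well defined there.

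I do not anticipate a serious obstacle, since the heart of the argument is just Gibbs' inequality after the variables are separated. The one point requiring care is justifying the decoupling itself — verifying that under the reparametrization $P_D(k|\mathbf{x})=D(\mathbf{x})\tilde{p}_{D}(k|\mathbf{x})$ the objective genuinely factors into a $D$-only part and a $\tilde{p}_{D}$-only part, and that pointwise maximization over $\mathbf{x}$ is legitimate in the infinite-capacity setting. I would also note the measure-zero caveat where $D^*(\mathbf{x})=0$, which is absorbed by the convention $\tilde{p}_{D}(k|\mathbf{x})=\tfrac{1}{K}$ from the remark following \eqref{eq8}.
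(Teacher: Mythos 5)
Your proof is correct, and it reaches the lemma by a more direct route than the paper's, even though the mathematical core — non-negativity of the KL divergence plus the infinite-capacity assumption — is the same in both. The paper anchors its argument on the bag-level log-evidence: it expands $E_{\mathbf{x}\sim p_d^i}[\log p(\mathbf{x})]$, a quantity free of the discriminator, into three pieces — the lower-bound term of \eqref{eq2}, a second discriminator-free term involving $\log\frac{p(\mathbf{x}|y)}{p(y|\mathbf{x})}$, and the expected divergence $E_{\mathbf{x}\sim p_d^i}KL(p_i(y)\|\tilde{p}_{D}(y|\mathbf{x}))$ — so that maximizing the lower bound is identified with minimizing the expected KL, which vanishes exactly when $\tilde{p}_{D^*}(y|\mathbf{x})\overset{a.e.}{=}p_i(y)$ on each bag. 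You arrive at the same KL-minimization step, but pointwise in $\mathbf{x}$ and with the entropy $-H(p_i)$ as the discriminator-free anchor, via the identity $\sum_k p_i(k)\log q(k)=-H(p_i)-KL(p_i\|q)$; this is Gibbs' inequality and dispenses entirely with the paper's bookkeeping in $p(\mathbf{x},y)$, $p(\mathbf{x}|y)$ and $p(y|\mathbf{x})$, whose only role there is to be discarded as constant. Your extra preliminary — the reparametrization $P_D(k|\mathbf{x})=D(\mathbf{x})\,\tilde{p}_D(k|\mathbf{x})$ and the observation that the adversarial terms depend only on $D(\mathbf{x})$ while the supervised lower bound depends only on $\tilde{p}_D(\cdot|\mathbf{x})$ — is not needed for the lemma as stated (it concerns maximization of the lower bound alone), but it is sound, and it is precisely the structure that makes the optimal discriminator of \refthm{prop1} factor into $\frac{p_d}{p_d+p_g}$ times a bag-prior mixture; in effect your write-up proves the lemma and pre-packages half of that theorem. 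What the paper's framing buys instead is the EM/ELBO analogy: writing the lower bound as log-evidence minus a KL gap makes explicit the ``tight up to KL'' interpretation alluded to in the remark preceding the lemma.
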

\begin{proof}
Taking the aggregation with respect to one bag, for example, the $i^{th}$ bag, we have:
\begin{equation}\label{eq5}
\small
    \begin{split}
        E_{\mathbf{x}\sim p_d^i}[logp(\mathbf{x})]\!&=\!E_{\mathbf{x}\sim p_d^i}\!log\!\Big[\frac{p(\mathbf{x},y)}{\tilde{p}_{D}(y|\mathbf{x})}\frac{\tilde{p}_{D}(y|\mathbf{x})}{p(y|\mathbf{x})}\!\Big]\!=\!E_{\mathbf{x}\sim p_d^i}\!\int\!p_i(y) log\!\Big[\frac{p_i(y)p(\mathbf{x}|y)}{\tilde{p}_{D}(y|\mathbf{x})}\frac{\tilde{p}_{D}(y|\mathbf{x})}{p(y|\mathbf{x})}\!\Big]\text{d}y\\
        &=\!E_{\mathbf{x}\sim p_d^i}\!\int\! p(y_i)\Big[log\tilde{p}_{D}(y|\mathbf{x})\!+\!log\frac{p(\mathbf{x}|y)}{p(y|\mathbf{x})}\Big]\text{d}y\!+\!E_{\mathbf{x}\sim p_d^i}KL(p_i(y)\|\tilde{p}_{D}(y|\mathbf{x}))\\
        &\geqslant\!\sum_{k=1}^K\!p_i(k)E_{\mathbf{x}\sim p_d^i}\!\Big [log \tilde{p}_{D}(k|\mathbf{x})\!\Big ]+\!\sum_{k=1}^K\!p_i(k)E_{\mathbf{x}\sim p_d^i}\!\Big [log\frac{p(\mathbf{x}|k)}{p(k|\mathbf{x})}\!\Big ].
    \end{split}
\end{equation}
Here, because we only consider $\mathbf{x}\sim p_d^i$, $p(\mathbf{x},y)\!=\!p_i(y)p(y|\mathbf{x})$ holds. Note that the last term in \eqref{eq5} is free of the discriminator, and the aggregation can be independently performed within every bag due to the disjoint assumption on bags. Then, maximizing the lower bound in \eqref{eq2} is equivalent to minimizing the expectation of KL-divergence between $p_i(y)$ and $\tilde{p}_{D}(y|\mathbf{x})$. Because of the infinite capacity assumption on discriminator and the non-negativity of KL-divergence, we have:
\begin{equation}
D^*=\mathop{\arg\min}_{D}E_{\mathbf{x}\sim p_d^i}KL(p_i(y)\|\tilde{p}_{D}(y|\mathbf{x}))\Leftrightarrow\tilde{p}_{D^*}(y|\mathbf{x})\overset{a.e.}{=}p_i(y), \mathbf{x}\sim p_d^i(\mathbf{x}).
\end{equation}
That concludes the proof.
\end{proof}

\reflem{prop2} tells us that if there is only one bag, then the final classifier $\tilde{p}_{D^*}(y|\mathbf{x})\!\overset{a.e.}{=}\!p(y)$. However, there are normally multiple bags in LLP problem, the final classifier will somehow be a trade-off among all the prior proportions $p_i(y),i=1,2,\!\cdots\!,n$. Next, we will show how the adversarial learning on the discriminator helps to determine the formulation of this trade-off in a weighted aggregation. 

\begin{thm}\label{prop1}
For fixed $G$, the optimal discriminator $D^*$ for $\widetilde{V}(G,D)$ satisfies:
\begin{equation}\label{eq6}
P_{D^*}(y\!=\!k|\mathbf{x})=\frac{\sum_{i=1}^n p_i(k)p_d^i(\mathbf{x})}{ \sum_{i=1}^n p_d^i(\mathbf{x})\!+\!p_g(\mathbf{x})}, k\!=\!1,2,\cdots,K.
\end{equation}
\end{thm}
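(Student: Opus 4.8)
The plan is to adapt the classical argument of Goodfellow et al.\ for the optimal GAN discriminator to the $(K+1)$-class, simplex-constrained setting, exploiting the infinite-capacity assumption so that $\widetilde{V}(G,D)$ can be maximized pointwise in $\mathbf{x}$. First I would write $\widetilde{V}(G,D)$ out in full (with $\lambda=1$ as fixed in the preceding remark), substituting the normalization $\tilde{p}_D(k|\mathbf{x}) = P_D(k|\mathbf{x})/(1-P_D(K+1|\mathbf{x}))$ from \eqref{eq8} into the lower-bound version of the supervised term, and converting each bag expectation $E_{\mathbf{x}\sim p_d^i}$ into an integral against $p_d^i(\mathbf{x})$. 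Since $\log\tilde{p}_D(k|\mathbf{x}) = \log P_D(k|\mathbf{x}) - \log(1-P_D(K+1|\mathbf{x}))$, the supervised term splits into a piece carrying $\log P_D(k|\mathbf{x})$ and a piece carrying $-\log(1-P_D(K+1|\mathbf{x}))$.

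The key observation, and the step I expect to drive the whole argument, is a cancellation. Using the proportion constraint $\sum_{k=1}^K p_i(k)=1$, the collected $-\log(1-P_D(K+1|\mathbf{x}))$ contributions integrate to $-\int \big(\sum_i p_d^i(\mathbf{x})\big)\log(1-P_D(K+1|\mathbf{x}))\,\text{d}\mathbf{x}$, which exactly annihilates $L_{real} = \sum_i E_{\mathbf{x}\sim p_d^i}[\log P_D(y\leq K|\mathbf{x})]$, because $P_D(y\leq K|\mathbf{x}) = 1-P_D(K+1|\mathbf{x})$. After this cancellation the objective collapses to the single clean integral $\int \sum_{k=1}^{K+1} a_k(\mathbf{x})\log P_D(k|\mathbf{x})\,\text{d}\mathbf{x}$, where $a_k(\mathbf{x}) = \sum_{i=1}^n p_i(k)p_d^i(\mathbf{x})$ for $k=1,\dots,K$ and $a_{K+1}(\mathbf{x}) = p_g(\mathbf{x})$.

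Next I would use the non-parametric assumption: the integrand depends on $D$ only through its values at $\mathbf{x}$, so the maximization decouples across $\mathbf{x}$ subject to the single simplex constraint $\sum_{k=1}^{K+1} P_D(k|\mathbf{x})=1$. Maximizing $\sum_{k=1}^{K+1} a_k\log P_D(k|\mathbf{x})$ over the probability simplex is a standard concave program; either a Lagrange multiplier on the normalization or the weighted Gibbs' inequality (equivalently, non-negativity of the KL-divergence, as already invoked in \reflem{prop2}) yields $P_{D^*}(k|\mathbf{x}) = a_k(\mathbf{x})/\sum_{j=1}^{K+1} a_j(\mathbf{x})$.

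Finally I would simplify the normalizer, again via $\sum_{k=1}^K p_i(k)=1$: $\sum_{j=1}^{K+1} a_j(\mathbf{x}) = \sum_{k=1}^K\sum_{i=1}^n p_i(k)p_d^i(\mathbf{x}) + p_g(\mathbf{x}) = \sum_{i=1}^n p_d^i(\mathbf{x}) + p_g(\mathbf{x})$, which upon substitution gives exactly \eqref{eq6}. The only genuine subtlety is the cancellation in the second paragraph; everything after it is the multi-class analogue of the textbook single-discriminator computation, with the bag/partition structure of \refdef{defn} entering solely through the aggregated weights $a_k(\mathbf{x})$.
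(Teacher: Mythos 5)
Your proposal is correct and takes essentially the same route as the paper: the paper likewise rewrites $\widetilde{V}(G,D)$ (with $\lambda=1$) as a single integral over $\mathbf{x}$ using $P_D(y\leq K|\mathbf{x})=1-P_D(K+1|\mathbf{x})$ and the normalization \eqref{eq8}, then maximizes the integrand pointwise, which yields \eqref{eq6}. The cancellation you single out is exactly what makes the paper's derivative computation collapse (the $\log(1-P_D(K+1|\mathbf{x}))$ contributions drop via $\sum_{k=1}^K p_i(k)=1$); the only cosmetic difference is that you perform this cancellation up front and finish with Gibbs' inequality on the simplex, whereas the paper differentiates the uncancelled integrand directly.
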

\begin{proof}
According to \eqref{eq4} and \eqref{eq2} and given any generator G, we have:
\begin{equation}
\footnotesize
\begin{split}
&\widetilde{V}(G,D)\!=\!\sum_{i=1}^n E_{\mathbf{x}\sim p_d^i}\!\Big [\!log (1\!-\!P_D(K\!+\!1|\mathbf{x}))\!\Big]\!+\!E_{\mathbf{x}\sim p_g}\!\Big[\!log P_D(K\!+\!1|\mathbf{x})\!\Big]\!+\! \sum_{i=1}^n\!\sum_{k=1}^K\!p_i(k)E_{\mathbf{x}\sim p_d^i}\!\Big [log \tilde{p}_{D}(k|\mathbf{x})\!\Big ]\\
& =\!\int\!\Big\{\!\sum_{i=1}^n p_d^i(\mathbf{x})\Big[\!log\!\big[\sum_{k=1}^K\!P_D(k|\mathbf{x})\!\big]\!+\sum_{k=1}^K\!p_i(k)log \frac{P_D(k|\mathbf{x})}{1\!-\!P_D(K\!+\!1|\mathbf{x})}\Big]\!+\!p_g(\mathbf{x})log\!\Big[\!1\!-\!\sum_{k=1}^K\!P_D(k|\mathbf{x})\!\Big]\!\Big\}\!\text{d}\mathbf{x}.
\end{split}
\end{equation}
By taking the derivative of the integrand, we find the solution in $[0,1]$ for maximization as \eqref{eq6}.
\end{proof}
\begin{rem}[\textbf{Beyond the Incontinuity of $p_g$}]
According to \cite{arjovsky2017towards}, the problematic scenario is that the generator is a mapping from a low dimensional space to a high dimensional one. This will result in the density of $p_g(\mathbf{x})$ infeasible. However, based on the definition of $\tilde{p}_{D}(y|\mathbf{x})$ in \eqref{eq8}, we have:
\begin{equation}\label{eq10}
    \tilde{p}_{D^*}(y|\mathbf{x})\!=\!\frac{\sum_{i=1}^n p_i(y)p_d^i(\mathbf{x})}{ \sum_{i=1}^n p_d^i(\mathbf{x})}\!=\!\sum_{i=1}^n w_i(\mathbf{x})p_i(y).
\end{equation}
Hence, our final classifier does not depend on $p_g(\mathbf{x})$. Furthermore, \eqref{eq10} explicitly expresses the normalized weights of the aggregation with $w_i(\mathbf{x})\!=\!\frac{p_d^i(\mathbf{x})}{\sum_{i=1}^n p_d^i(\mathbf{x})}$.
\end{rem}

\begin{rem}[\textbf{Relationship to One-side Label Smoothing}]
Notice that the optimal discriminator $D^*$ is also related to the one-sided label smoothing mentioned in \cite{salimans2016improved}, which is inspirited by \cite{szegedy2016rethinking} and shown to reduce the vulnerability of neural networks to adversarial examples \cite{warde201611}.

In particular, in our model, we only smooth labels of real data (multi-class) in the discriminator, by setting the targets as the prior proportions $p_i(y)$ in corresponding bags.
\end{rem}
\subsection{The Objective Function of Generator}
Normally, for the generator, we should solve the following optimization problem with respect to $p_g$.
\begin{equation}\label{eq7}
\begin{split}
\min_{G}\widetilde{V}(G,D^*)\!=\!\min_{G} E_{\mathbf{x} \sim p_g}logP_{D^*}(K+1|\mathbf{x}).
\end{split}
\end{equation}
Denoting $C(G)\!=\!\max_D \widetilde{V}(G,D)\!=\!\widetilde{V}(G,D^*)$, because $\widetilde{V}(G,D)$ is convex in $p_g$ and the supremum of a set of convex functions is still convex, we have the following sufficient
and necessary condition of global optimality.
\begin{thm}
The global minimum of $C(G)$ is achieved if and only if $p_g\!=\!\frac{1}{n}\sum_{i=1}^n p_d^i$.
\end{thm}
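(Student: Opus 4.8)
The plan is to follow the classical global-optimality argument for GANs, but to adapt it to the fact that the ``data side'' of $\widetilde{V}(G,D^*)$ is the \emph{unnormalized} sum $\sum_{i=1}^n p_d^i$ rather than a single probability density. First I would make the optimal discriminator completely explicit, including its $(K+1)$-th component. \refthm{prop1} already gives $P_{D^*}(k|\mathbf{x})$ for $k\le K$; summing over $k$ and using $\sum_{k=1}^K p_i(k)=1$ gives $\sum_{k=1}^K P_{D^*}(k|\mathbf{x}) = \frac{\sum_{i=1}^n p_d^i(\mathbf{x})}{\sum_{i=1}^n p_d^i(\mathbf{x})+p_g(\mathbf{x})}$, and hence $P_{D^*}(K+1|\mathbf{x}) = \frac{p_g(\mathbf{x})}{\sum_{i=1}^n p_d^i(\mathbf{x})+p_g(\mathbf{x})}$.

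Next I would substitute $D^*$ into $C(G)=\widetilde{V}(G,D^*)$. The two unsupervised terms of \eqref{eq4}--\eqref{eq2} collapse, using the identities above, into a single integral of the form $\int\big[\sum_i p_d^i\,\log\frac{\sum_i p_d^i}{\sum_i p_d^i+p_g} + p_g\,\log\frac{p_g}{\sum_i p_d^i+p_g}\big]\,\text{d}\mathbf{x}$. The supervised (cross-entropy) term, by contrast, depends on $G$ only through $\tilde{p}_{D^*}(\cdot|\mathbf{x})$, which by \eqref{eq10} is \emph{independent} of $p_g$; I would therefore set it aside as an additive constant that does not influence the location of the minimizer in $p_g$.

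The crux is to turn the remaining unsupervised integral into a recognizable divergence. I would introduce the genuine density $\bar{p}_d := \frac{1}{n}\sum_{i=1}^n p_d^i$ and the mixture $q := \frac{\sum_{i=1}^n p_d^i + p_g}{n+1}$, which also integrates to $1$ since $\int\sum_i p_d^i = n$ and $\int p_g = 1$. Rewriting the integrand in terms of $q$ recasts it (up to additive constants $n\log\frac{n}{n+1}-\log(n+1)$) as $n\,KL(\bar{p}_d\|q) + KL(p_g\|q)$, i.e.\ an $(\frac{n}{n+1},\frac{1}{n+1})$-weighted Jensen--Shannon divergence between $\bar{p}_d$ and $p_g$. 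Invoking the non-negativity of $KL$, this quantity is $\ge 0$ with equality iff both terms vanish, forcing $\bar{p}_d = q = p_g$; since the weights are strictly positive and the functional is convex in $p_g$, the minimizer is unique and automatically satisfies $\int p_g = 1$ because $\int\bar{p}_d=1$. This yields both directions: $C(G)$ attains its global minimum if and only if $p_g = \bar{p}_d = \frac{1}{n}\sum_{i=1}^n p_d^i$.

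I expect the main obstacle to be precisely this normalization mismatch: because $\sum_i p_d^i$ integrates to $n$ rather than $1$, the unsupervised terms are \emph{not} an ordinary Jensen--Shannon divergence, and a naive pointwise minimization of the integrand is misleading (the pointwise derivative $\log\frac{p_g}{\sum_i p_d^i+p_g}$ is always negative, which would spuriously drive $p_g\to\infty$). The correct object is the \emph{weighted} Jensen--Shannon divergence, whose unique zero must be characterized through positivity of $KL$ together with the mass constraint $\int p_g=1$; getting the weights $\tfrac{n}{n+1},\tfrac{1}{n+1}$ and the constant $n\log\frac{n}{n+1}-\log(n+1)$ right is the only delicate bookkeeping in the argument.
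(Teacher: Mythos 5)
Your proof is correct and follows essentially the same route as the paper: substitute the optimal discriminator of \refthm{prop1} into $C(G)$, note via \eqref{eq10} that the cross-entropy term does not depend on $p_g$, and identify the remaining unsupervised integral as a divergence that vanishes precisely at $p_g=\frac{1}{n}\sum_{i=1}^n p_d^i$. The only substantive difference is that your bookkeeping is more careful than the paper's own intermediate step: the paper writes the unsupervised part as $2\cdot JSD(p_d\|p_g)-2\log(2)$ with the \emph{unnormalized} $p_d=\sum_{i=1}^n p_d^i$ (which integrates to $n$), an abuse of notation that is inconsistent with the constant $n\log(n)-(n+1)\log(n+1)$ appearing in its own equation \eqref{eq9}. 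Your rewriting in terms of $\bar{p}_d=\frac{1}{n}\sum_{i=1}^n p_d^i$ and the mixture $q=\frac{1}{n+1}\big(\sum_{i=1}^n p_d^i+p_g\big)$, which yields $n\,KL(\bar{p}_d\,\|\,q)+KL(p_g\,\|\,q)$ plus exactly that constant, is the correct weighted Jensen--Shannon object, and your warning that naive pointwise minimization of the integrand would spuriously drive $p_g\to\infty$ is well taken: it is the non-negativity of the two $KL$ terms, combined with the mass constraint $\int p_g=1$, that delivers the if-and-only-if characterization rigorously, where the paper only gestures at this by remarking that $\frac{1}{n}p_d$ is a well-defined density.
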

\begin{proof}
Denote $p_d=\sum_{i=1}^n p_d^i$. Hence, according to \refthm{prop1}, we can reformulate $C(G)$ as:
\begin{equation}
\small
\begin{split}
C(G)& \!=\!\sum_{i=1}^n E_{\mathbf{x}\sim p_d^i} \Big [ log\frac{p_d(\mathbf{x})}{p_d{(\mathbf{x})}\!+\!p_g{(\mathbf{x})}}\Big ]\!+\!E_{\mathbf{x}\sim p_g}\Big [log\frac{p_g(\mathbf{x})}{p_d{(\mathbf{x})}\!+\!p_g{(\mathbf{x})}}\Big ]\!+\!\sum_{i\!=\!1}^n \!\sum_{k=1}^K\!p_i(k)E_{\mathbf{x}\sim p_d^i}\!\Big [log \tilde{p}_{D^*}(k|\mathbf{x})\!\Big ]\\
&\! = \! 2\cdot JSD(p_d\|p_g)\!-\!2log(2)\!-\!\sum_{i\!=\!1}^nE_{\mathbf{x}\sim p_d^i}\Big[CE(p_i(y),\tilde{p}_{D^*}(y|\mathbf{x}))\Big],\\
\end{split}
\end{equation}
where $JSD(\cdot\|\cdot)$ and $CE(\cdot,\cdot)$ are the Jensen-Shannon divergence and cross entropy between two distributions, respectively. However, note that $p_d$ is a summation of $n$ independent distributions, so $\frac{1}{n}p_d$ is a well-defined probabilistic density. Then, we have:
\begin{equation}\label{eq9}
\small
C(G^*)\!=\!\min_{G}C(G)\!= nlog(n)\!-\!(n\!+\!1)log(n\!+\!1)\!-\!\sum_{i\!=\!1}^nE_{\mathbf{x}\sim p_d^i}\Big[CE(p_i(y),\tilde{p}_{D^*}(y|\mathbf{x}))\Big]
\Longleftrightarrow p_{g^*}\!\overset{\text{a.e.}}{=}\!\frac{1}{n}p_d.
\end{equation}
That concludes the proof.
\end{proof}
\begin{rem}
When there is only one bag, the first two terms in \eqref{eq9} will degenerate as $nlog(n)\!-\!(n\!+\!1)log(n\!+\!1)\!=\!-\!2log2$, which adheres to results in original GANs. On the other hand, the third term manifests the uncertainty on instance label, which is concealed in the form of proportion.
\end{rem}
\begin{rem}
According to the analysis above, ideally, we can obtain the Nash equilibrium between the discriminator and the generator, i.e., the solution pair $(G^*,D^*)$ satisfies:
\begin{equation}
\small
\begin{split}
\widetilde{V}(G^*,D^*)\geqslant\widetilde{V}(G^*,D),\forall D; \widetilde{V}(G^*,D^*)\leqslant\widetilde{V}(G,D^*),\forall G.
\end{split}
\end{equation}
\end{rem}
However, as shown in \cite{dai2017good}, a well-trained generator would lead to the inefficiency of supervised information. In other words, the discriminator would possess the same generalization ability as merely training it on $L_{prop}$. Hence, we apply feature matching (FM) to the generator and obtain its alternative objective by matching the expected value of the features (statistics) on an intermediate layer of the discriminator \cite{salimans2016improved}: $L(G)\! =\! \|E_{\mathbf{x} \sim \frac{1}{n}p_d}f(\mathbf{x})\! -\! E_{\mathbf{x} \sim p_g }f(\mathbf{x})\|_2^2$.
In fact, FM is similar to the perceptual loss for style transfer in a concurrent work \cite{johnson2016perceptual}, and the goal of this improvement is to impede the ``perfect'' generator resulting in unstable training and discriminator with low generalization.

\subsection{LLP-GAN Algorithm}
So far, we have clarified the objective functions of both discriminator and generator in LLP-GAN. When accomplishing the training stage, the discriminator can be put into effect as the final classifier.

The strict proof for algorithm convergence is similar to that in \cite{goodfellow2014generative}. Because $\max_D \widetilde{V}(G,D)$ is convex in $G$, and the subdifferential of $\max_D \widetilde{V}(G,D)$ contains that of $\widetilde{V}(G,D^*)$ in every step, the line search method (stochastic) gradient descent converges \cite{boyd2004convex}.

We present the LLP-GAN algorithm, which coincides with the algorithm of the original GAN \cite{goodfellow2014generative}.

\begin{algorithm}[H]
\SetAlgoLined
\KwIn{The training set $\mathcal{L}\!=\!\{(\mathcal{B}_i,\mathbf{p}_i)\}_{i=1}^n$; $L$: number of total iterations; $\lambda$: weight parameter.}
\KwOut{The parameters of the final discriminator $D$.}
Set $m$ to the total number of training data points.
\\
\For{i=1:L}
{
Draw $m$ samples $\{\mathbf{z}^{(1)},\mathbf{z}^{(2)},\!\cdots\!,\mathbf{z}^{(m)}\}$ from a simple-to-sample noise prior $p(\mathbf{z})$ (e.g., $N(0,I)$).
\\
Compute $\{G(\mathbf{z}^{(1)}),G(\mathbf{z}^{(2)}),\cdots,G(\mathbf{z}^{(m)})\}$ as sampling from $p_g(\mathbf{x})$.
\\
Fix the generator $G$ and perform gradient ascent on parameters of $D$ in $\widetilde{V}(G,D)$ for one step.
\\
Fix the discriminator $D$ and perform gradient descent on parameters of $G$ in $L(G)$ for one step.
}
Return parameters of the discriminator $D$ in the last step.
\caption{LLP-GAN Training Algorithm}
\label{alg:Framwork}
\end{algorithm}
\section{Experiments}\label{exp}
Four benchmark datasets, MNIST, SVHN, CIFAR-10, and CIFAR-100 are investigated in our experiments\footnote{\noindent Code is available at https://github.com/liujiabin008/LLP-GAN.}. In addition to test error comparison, three issues are discussed: the generated samples, the performance under different selections of hyperparameter $\lambda$, and the algorithm scalability.

\subsection{Experimental Setup}
To keep up the same settings in previous work, bag size is fixed as 16, 32, 64, and 128. We divide training data into bags. MNIST data can be found in the code in Appendix. We conceal the accessible instance-level labels by replacing them with bag-level label proportions. Note that we still need the instance-level labels in test data to justify the effectiveness of the obtained classifier.

\subsection{Results on CIFAR-10}
Firstly, we perform both DLLP and LLP-GAN on CIFAR-10, which is a computer-vision dataset used for object recognition with 60,000 color images belonging to 10 categories, respectively. In the experimental setting, the training data is equally divided into five minibatches, with 10,000 images in each one, and the test data with exactly 1,000 images in every category.
\subsubsection{Convergence Analysis}
We report the convergence curves of test error (y-axis) with respect to the epoch (x-axis) under different bag sizes in \reffig{cifar10_conver}. As shown, our results are highly superior to DLLP in most of the epochs, with significant convergence in test error. In contrast, DLLP fails to converge under relatively large bag sizes (i.e., 64 and 128). Also, our method achieves a better performance in accuracy.
\begin{figure}[htbp]
\centering
\subfigure[Bag size: 16]{
\begin{minipage}[t]{0.24\linewidth}
\centering
\includegraphics[width=3.5cm]{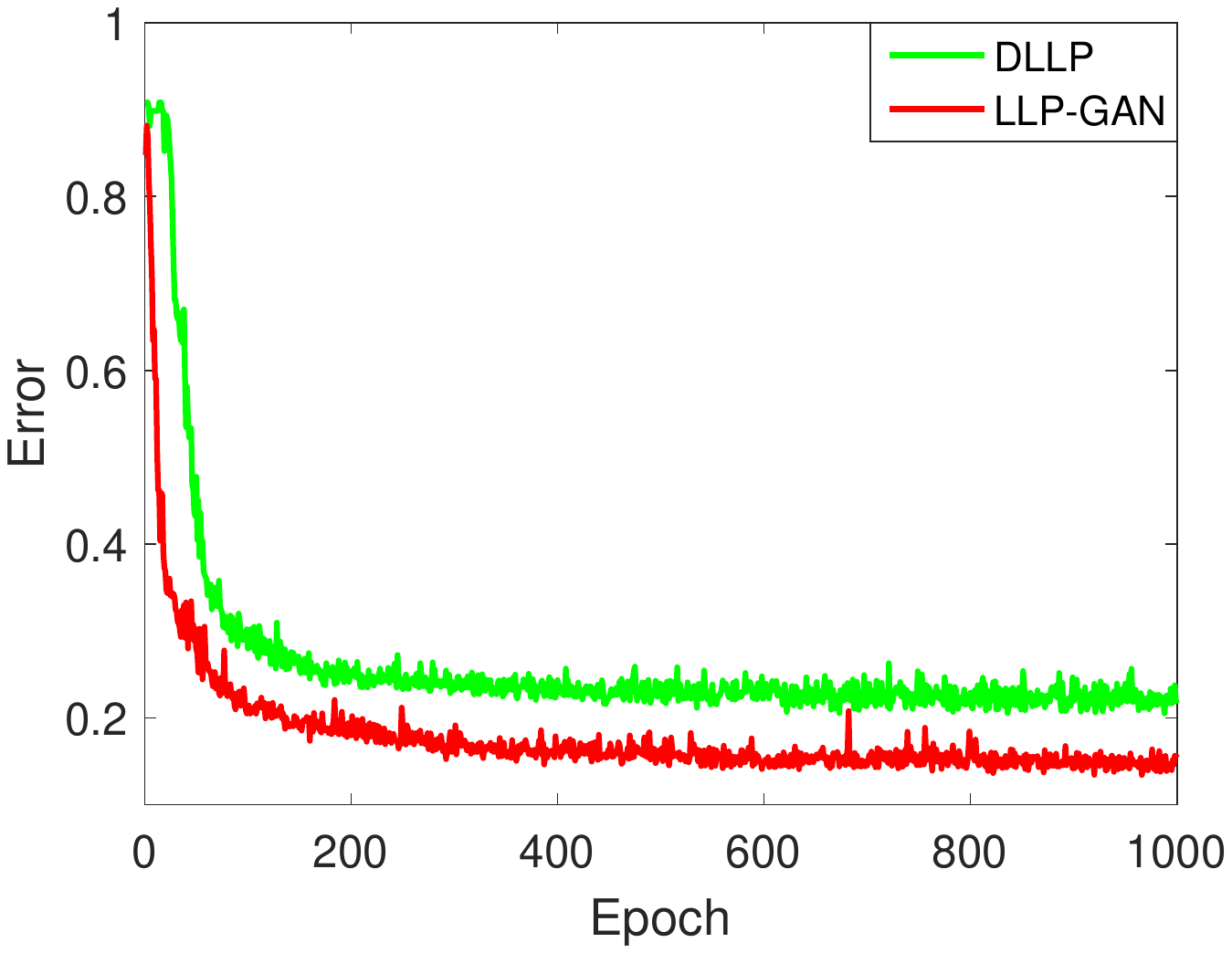}
\end{minipage}%
}%
\subfigure[Bag size: 32]{
\begin{minipage}[t]{0.24\linewidth}
\centering
\includegraphics[width=3.5cm]{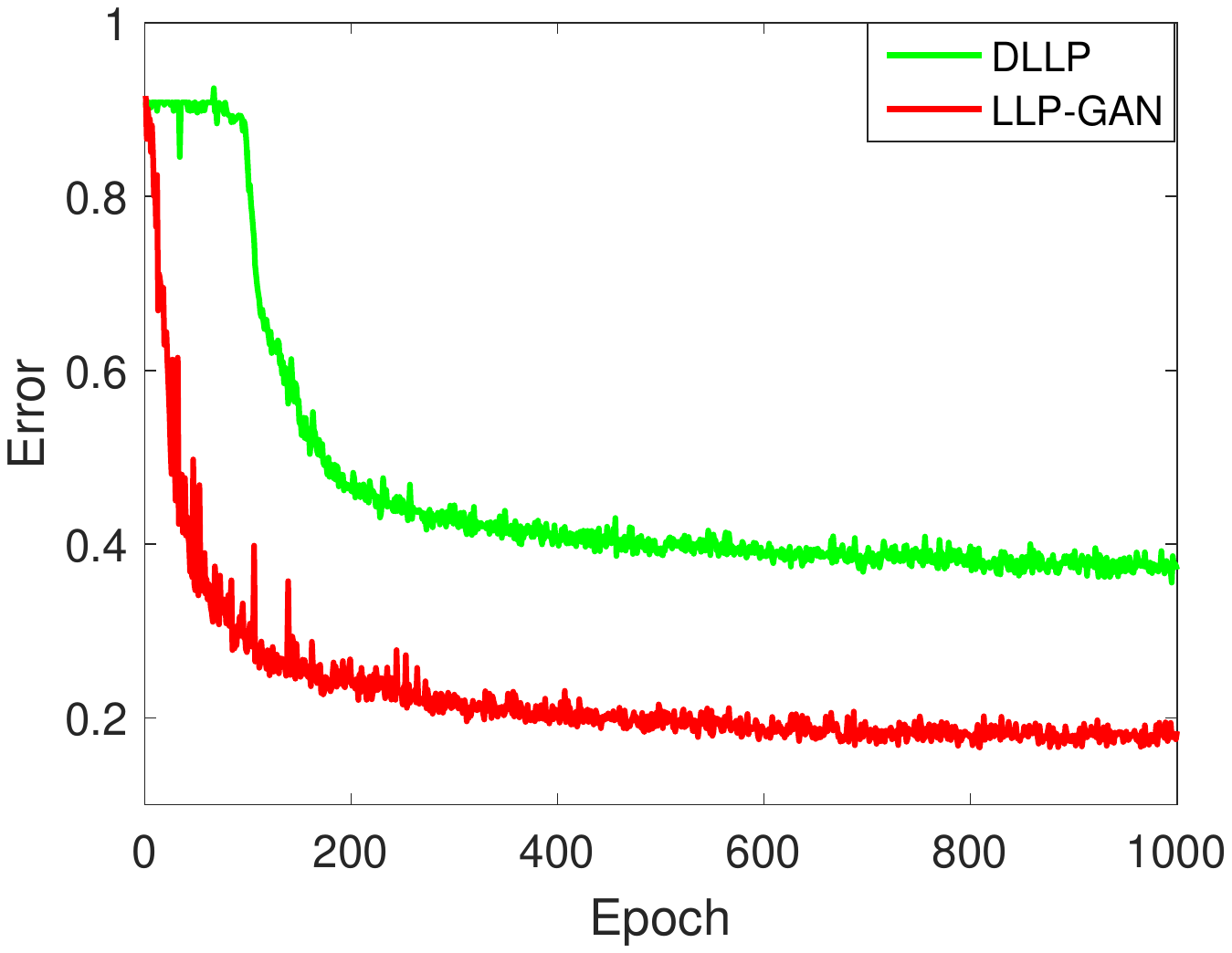}
\end{minipage}%
}%
\subfigure[Bag size: 64]{
\begin{minipage}[t]{0.24\linewidth}
\centering
\includegraphics[width=3.5cm]{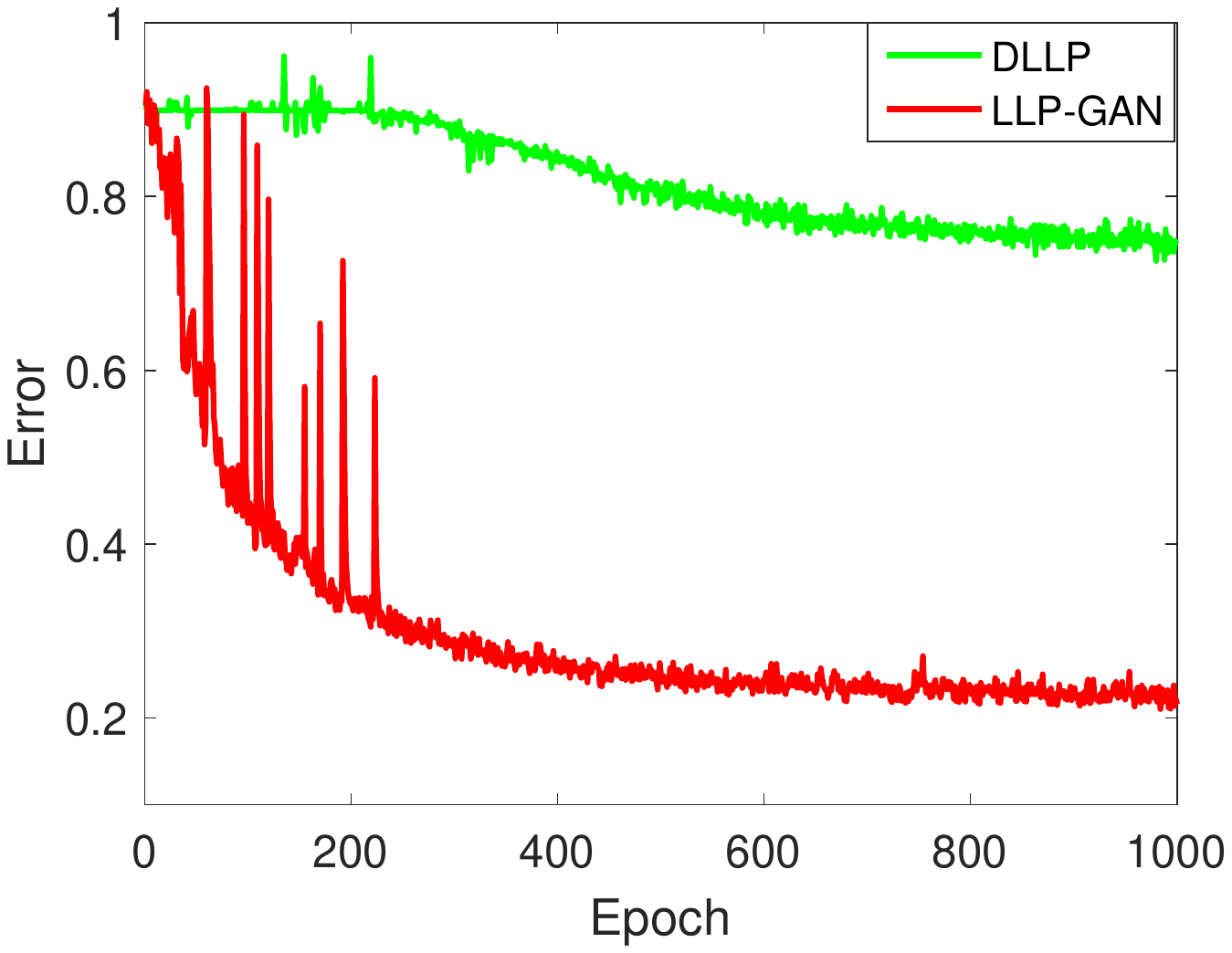}
\end{minipage}%
}%
\subfigure[Bag size: 128]{
\begin{minipage}[t]{0.24\linewidth}
\centering
\includegraphics[width=3.5cm]{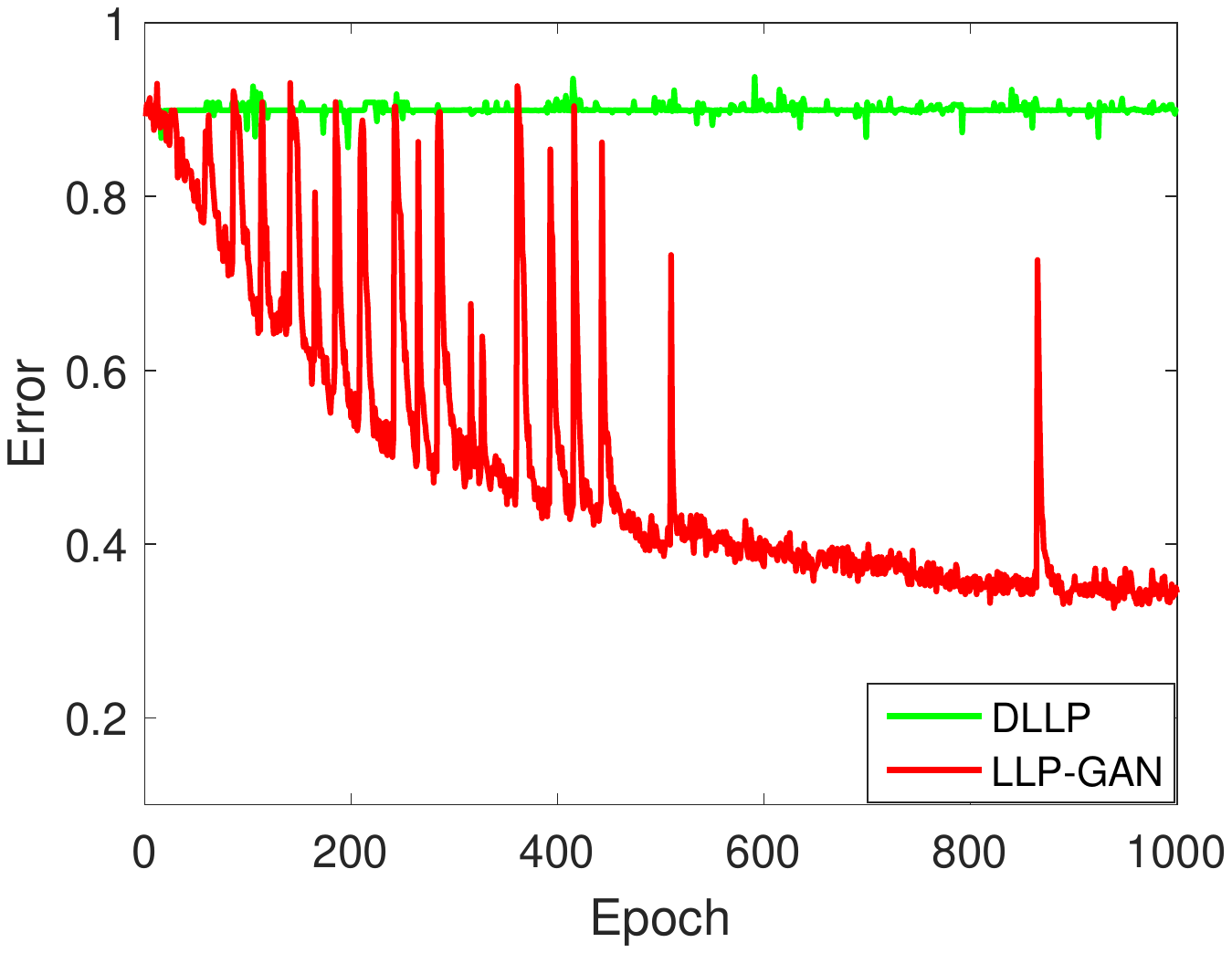}
\end{minipage}
}%
\centering
\caption{The convergence curves on CIFAR-10 w/ different bag sizes.}
\label{cifar10_conver}
\end{figure}
\begin{figure}[htbp]
\centering
\subfigure[GANs with FM]{
\begin{minipage}[t]{0.24\linewidth}
\centering
\includegraphics[width=3.2cm]{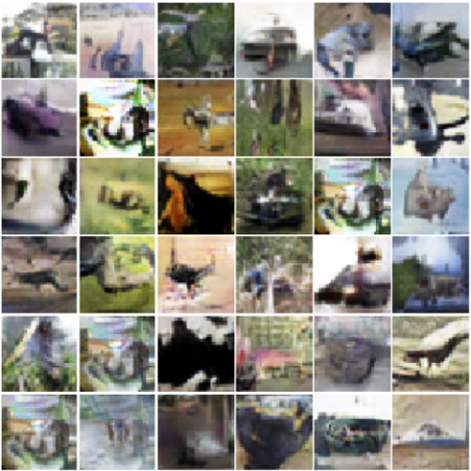}
\label{gengan}
\end{minipage}%
}%
\subfigure[Ours after 50 epochs]{
\begin{minipage}[t]{0.24\linewidth}
\centering
\includegraphics[width=3.2cm]{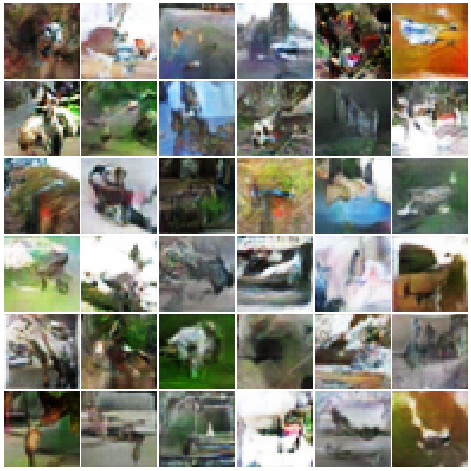}
\label{genour51}
\end{minipage}%
}%
\subfigure[Ours after 60 epochs]{
\begin{minipage}[t]{0.24\linewidth}
\centering
\includegraphics[width=3.2cm]{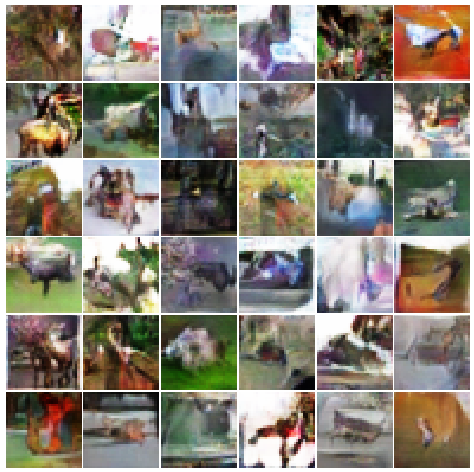}
\label{genour63}
\end{minipage}%
}%
\subfigure[Ours after 70 epochs]{
\begin{minipage}[t]{0.24\linewidth}
\centering
\includegraphics[width=3.2cm]{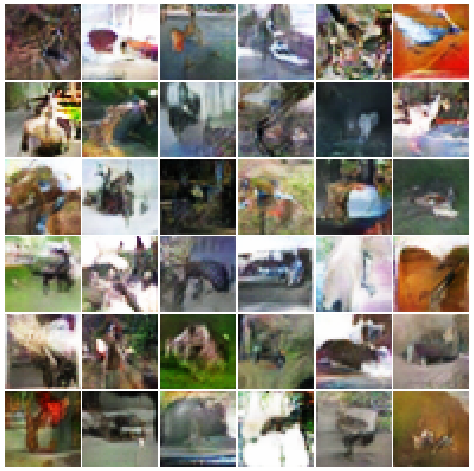}
\label{genour77}
\end{minipage}%
}%
\centering
\caption{Generated samples on CIFAR-10.}
\end{figure}
\subsubsection{Generated Samples}
The original GAN suffers from inefficient training on the generator \cite{arjovsky2017towards}. It suggests that the discriminator and generator cannot simultaneously perform well \cite{dai2017good}. In LLP-GAN, although it is the discriminator that we are interested in, we still expect a competent generator to construct efficient adversarial learning paradigm. As a result, we look at the generated samples of original GANs with FM in \reffig{gengan} and our method in \reffig{genour51}, \ref{genour63} and \ref{genour77}. It demonstrates that our approach can stably learn a comparable generator to produce similar samples to that of GANs.

\subsection{The Results of Error Rate}
\setlength\intextsep{0pt}
\begin{wrapfigure}{r}{5.1cm}
  \includegraphics[width=4.5cm]{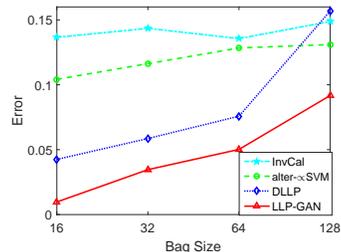}
  \vspace{-12pt}
  \caption{The average error rates w/ different bag sizes.}
  \label{fig_error}
\vspace{-10pt}
\end{wrapfigure}

Secondly, DLLP and LLP-GAN are carried out on four benchmark datasets with different bag sizes in \reftab{tab1}. We also give the fully supervised learning results as the baselines. In detail, baseline for MNIST and CIFAR-10 is offered by \cite{rasmus2015semi}. We describe its architecture in the Appendix. Network in \cite{lin2013network} is used as the baseline for SVHN and CIFAR-100.

In terms of test error, our method reaches a relatively better result, except for the simplest task MNIST, where both algorithms can attain satisfying results. However, DLLP becomes unacceptable when the bag size increases, while our method can properly tackle relatively large bag size. Besides, for each dataset, LLP becomes extremely difficult as bag size soaring, which is consistent with our intuition. Again, the architectures of our network are given in the Appendix.
\vspace{10pt}
\begin{table}[ht]
\caption{Test error rates (\%) on benchmark datasets w/ different bag sizes.}
\label{tab1}
\centering
\resizebox{0.8\textwidth}{16mm}{
\begin{tabular}{ccccccc}
  \hline
  \multirow{2}{*}{Dataset} & \multirow{2}{*}{Algorithm} & \multicolumn{4}{c}{Bag Size} & Baseline\\
  \cline{3-6}
  ~ & ~ & 16 & 32 & 64 & 128 & CNNs \\
  \hline
  \multirow{2}*{MNIST} & DLLP & 1.23 (0.100) & 1.33 (0.094) & 1.57 (0.088) & 3.55 (0.27)  & \multirow{2}*{0.36} \\
  ~ & LLP-GAN & \textbf{1.10} (0.026) & \textbf{1.23} (0.088) & \textbf{1.40} (0.089) & \textbf{3.49} (0.27)\\
  \hline
  \multirow{2}*{SVHN} & DLLP & 4.45 (0.069) & 5.29 (0.54) & 5.80 (0.91) & 39.73 (1.60) & \multirow{2}*{2.35} \\
  ~ & LLP-GAN & \textbf{4.03}(0.021) & \textbf{4.83}(0.51) & \textbf{5.42}(0.59) & \textbf{11.17}(1.12) \\
  \hline
  \multirow{2}*{CIFAR-10} & DLLP & 19.70 (0.77) & 34.39 (0.82) & 68.32 (1.34) & 82.89 (2.66) & \multirow{2}*{9.27}\\
  ~ & LLP-GAN & \textbf{13.68} (0.35) & \textbf{16.23} (0.43) & \textbf{21.03} (1.82) & \textbf{27.39} (4.31) \\
  \hline
  \multirow{2}*{CIFAR-100} & DLLP & 53.24(0.77) & 98.38(0.11) & 98.65(0.09) & 98.98(0.08) & \multirow{2}*{35.68}  \\
  ~ & LLP-GAN & \textbf{50.95}(0.67) & \textbf{56.44}(0.78) & \textbf{64.37}(1.52) & \textbf{85.01}(1.81) \\
    \hline
\end{tabular}
}
\end{table}

Because InvCal and alter-$\propto$SVM are originally designed for binary problem, we randomly select two classes and merely conduct binary classification on all datasets. The detailed results are provided in the Appendix. The average error rates with different bag sizes are displayed in \reffig{fig_error}. From the results, we can confidently tell the advantage of our algorithm in performance, especially when the bag size is relatively large. Indeed, at this moment, we cannot clarify to what extent this advantage attributes to the deep learning model. However, in spite of both using deep learning models, our method constantly performs better than DLLP.
\subsection{Hyperparameter Analysis and Complexity with Sample Size}
Thirdly, we illustrate the convergence curves of MNIST, SVHN, and CIFAR-10 under different $\lambda$s in \reffig{para_m}, \ref{para_s} and \ref{para_c}. For simpler task (MNIST), the performance is not sensitive to $\lambda$. However, for harder task (CIFAR-10), the performance becomes sensitive to $\lambda$. On the other hand, smaller $\lambda$ demonstrates more fluctuations, which is much severer in simpler tasks (MNIST and SVHN). Besides, \reffig{para_s} indicates that the convergence speed may be sensitive to the choice of $\lambda$. In most of the cases, $\lambda\!\geqslant\!1$ is a good choice, leading to a comparable performance within limited training time.

In addition, fixing the bag size, we provide the relative training time (training time per bag) to the relative sample size in \reffig{time}. We take logarithmic operation on sample size (x-axis). It demonstrates that the relative training time is asymptotically linear to the logarithmic sample size $m$. Denote the total training time as $t$, then $t\approx O(m\mathbf{ln}m)<O(m^2)$. Here, we assume that sample size and \# of bags are with same magnitude, due to the relative small bag sizes involved in our study.
\begin{figure}[htbp]
\centering
\subfigure[$\lambda$ on MNIST]{
\begin{minipage}[t]{0.24\linewidth}
\centering
\includegraphics[width=3.4cm]{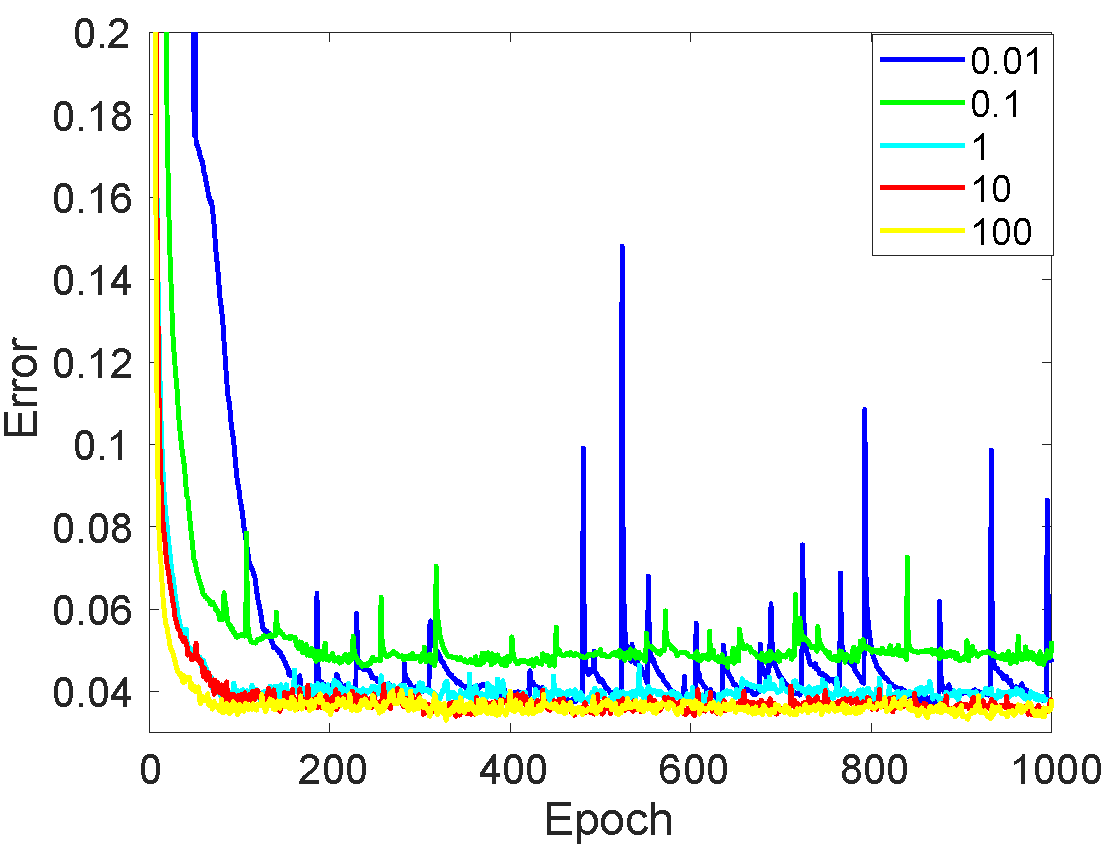}
\label{para_m}
\end{minipage}%
}%
\subfigure[$\lambda$ on SVHN]{
\begin{minipage}[t]{0.24\linewidth}
\centering
\includegraphics[width=3.4cm]{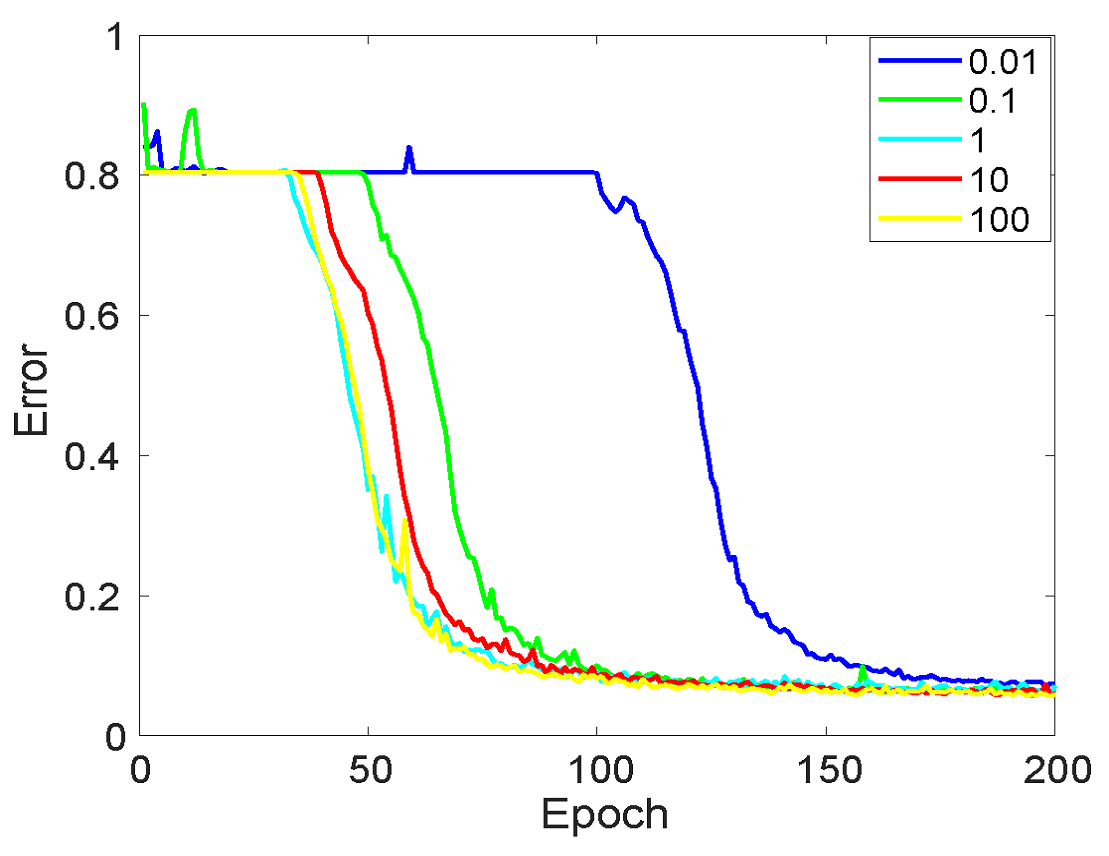}
\label{para_s}
\end{minipage}%
}%
\subfigure[$\lambda$ on CIFAR-10]{
\begin{minipage}[t]{0.24\linewidth}
\centering
\includegraphics[width=3.4cm]{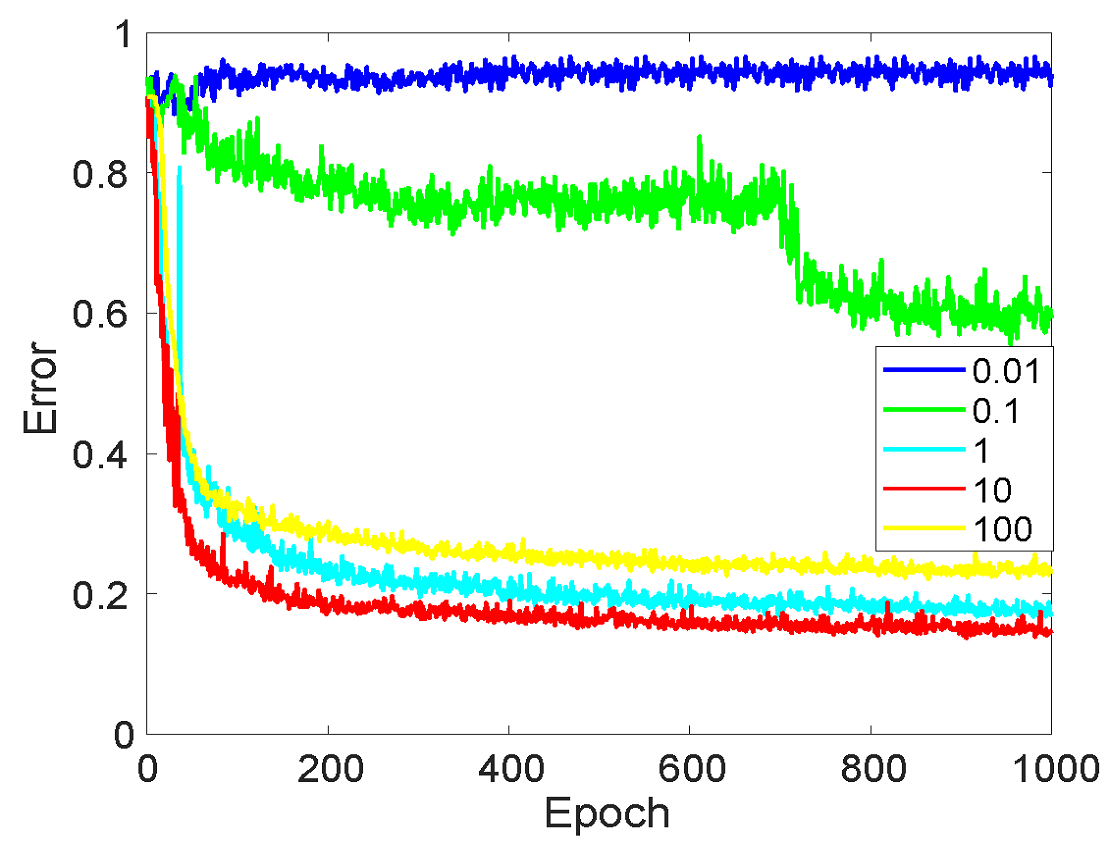}
\label{para_c}
\end{minipage}%
}%
\subfigure[Training time w/ different sample sizes]{
\begin{minipage}[t]{0.24\linewidth}
\centering
\includegraphics[width=3.4cm]{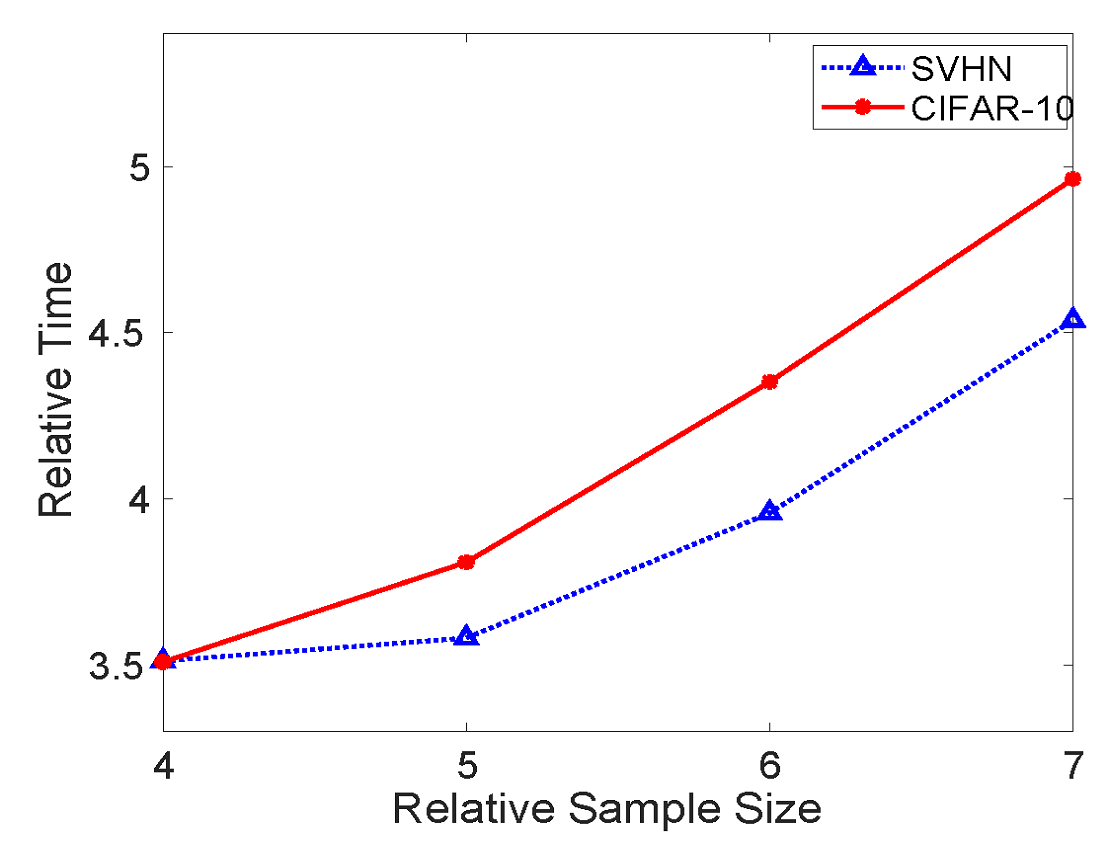}
\label{time}
\end{minipage}%
}%
\centering
\vspace{-10pt}
\caption{Analysis on hyperparameter and complexity.}
\end{figure}
\subsection{Discussion on Experimental Results}
Two issues should be clarified for experiments. Firstly, as shown in \reffig{cifar10_conver}, the results demonstrate oscillation as bag size soaring. This phenomenon indicates a common drawback of deep models: For more complex objective surfaces (more possible label candidates), normally the convergence will be dramatically getting worse, due to more chances to attain local minima or saddle points of the objective. Secondly, because our results are based on original datasets without data augmentation, the reported DLLP performance is worse than that in the concurrent \cite{dulac2019deep}.
\section{Conclusion}
This paper proposed a new algorithm LLP-GAN for LLP problem in virtue of the adversarial learning based on GANs. Consequently, our method is superior to existing methods in the following three aspects. Firstly, it demonstrates nice theoretical properties that are innately in accordance with GANs. Secondly, LLP-GAN can produce a probabilistic classifier, which benefits from the generative model and meets the proportion consistency naturally. Thirdly, on account of equipping CNNs, our algorithm is suitable for the large-scale problem, especially for image datasets. Additionally, the experiments on four benchmark datasets have verified all these advantages of our approach.

Nevertheless, limitations in our method can be summarized in four aspects. Firstly, learning complexity in the sense of PAC has not been involved in this study. That is to say, we cannot evaluate the performance under limited data. Secondly, there is no guarantee on algorithm robustness to data perturbations, notably when the proportions are imprecisely provided. Thirdly, varying GAN models (such as WGAN \cite{arjovsky2017wasserstein}) are not fully considered, and their performance is still unknown. In addition, in many real-world applications, the bags are built based on certain features, such as the education levels and job titles, rather than randomly established. Hence, a practical issue will be to ensure good performance under these non-random bag assignments. To overcome these drawbacks will shed light on the promising improvement of our current work.
\section*{Appendix}
\section{The Architectures of Networks}
In \reftab{arch}, we deliver the CNNs architecture used in our experiments. We describe the operation in the format of ``filter size / type / \# of output channel''. Note that the network architecture of LLP-GAN for SVHN and CIFAR-100 is the same as that for CIFAR-10. In particular, \emph{Dense} means fully connected layer. \emph{Transpose\_conv} is the deconvolution layer. In our model, we choose ReLU as the activation function.

Following a standard setting in the previous work \cite{dumoulin2016adversarially,salimans2016improved}, we perform 11-way softmax on the 10-dimensional output in the last fully connected layer. In detail, we add an extra dimension to the output and fix its value as zero,  which is a form of over-parameterization. Then, we apply 11-way softmax to the 11-dimensional vector.
\begin{table}[htpb]
\vspace{10pt}
\caption{Network architectures in LLP-GAN.}
\label{arch}
\centering
\setlength{\tabcolsep}{5pt}
\begin{tabular}{cccc}
\hline
\multicolumn{2}{c}{Generator}   & \multicolumn{2}{c}{Discriminator} \\
\hline
MNIST        &  CIFAR-10  & MNIST     &    CIFAR-10 \\
\hline
\multicolumn{4}{c}{Input 28$\times$28 or 32$\times$32 monochrome or RGB image} \\
\hline
~ &     ~   &   ~ & dropout  0.2   \\
\hline
Dense-BN 500 & Dense-BN  4*4*512  & 5$\times$5 conv. 32 & 3$\times$3 conv.   64   \\
~ & ~  &  3$\times$3 conv. 64 & 3$\times$3 conv.   64   \\
~             & ~ & ~  & 3$\times$3 conv.   64   \\
\hline
~             &        ~                    & ~    & dropout  0.5   \\
\hline
Dense-BN 500             &         5$\times$5 Transpose\_conv-BN  256                  &     1$\times$1 conv. 32  & 3$\times$3 conv.   128  \\
~             &          ~                  &          ~     & 3$\times$3 conv.   128  \\
~             &           ~                 &         ~      & 3$\times$3 conv.   128  \\
\hline
~             &            ~                &        ~       & dropout  0.5   \\
\hline
Dense-BN 784       &            5$\times$5 Transpose\_conv-BN  128             &      Dense   1024       & 3$\times$3 conv.   256  \\
~             &              ~              &      ~         & 1$\times$1 conv.   128  \\
~             &               ~             &     ~          & 1$\times$1 conv.   64   \\
\hline
~             &                ~            &    ~           & global meanpooling  8 \\
\hline
~             &                5$\times$5 Transpose\_conv  3          &    Dense   10         & Dense   10  \\
\hline
~ & ~ & \multicolumn{2}{c}{11-way softmax (over-parameterization)}\\
\hline
\end{tabular}
\end{table}

\vspace{20pt}
The CNNs architectures used in the baselines for MNIST and CIFAR-10 are given in \reftab{baseline}, which are the same as that in \cite{rasmus2015semi}. Besides, the CNNs used in the baselines for SVHN and CIFAR-100 are the same as that in \cite{lin2013network}.
\begin{table}[htbp]
\vspace{10pt}
\caption{The baseline's architectures.}
\label{baseline}
\centering
\begin{tabular}{cccc}
\hline
MNIST & ~ & CIFAR-10 & ~       \\
\hline
\multicolumn{4}{c}{Input 28$\times$28 or 32$\times$32 monochrome or RGB image}    \\
\hline
5$\times$5 conv. ReLU 32  & ~ & 3$\times$3 conv. BN LeakyReLU 96 & ~    \\
~ & ~ & 3$\times$3 conv. BN LeakyReLU  96 & ~    \\
~ & ~ & 3$\times$3 conv. BN LeakyReLU 96  & ~   \\
\hline
2$\times$2 max-pooling stride 2 BN & ~ & 2$\times$2 max-pooling stride 2 BN & ~   \\ 
\hline
3$\times$3 conv. BN ReLU 64 & ~ & 3$\times$3 conv. BN LeakyReLU 192 & ~            \\
3$\times$3 conv. BN ReLU 64  & ~ & 3$\times$3 conv. BN LeakyReLU 192  & ~             \\
~ & ~ & 3$\times$3 conv. BN LeakyReLU  192 & ~  \\
\hline
2$\times$2 max-pooling stride 2 BN ~ & ~ & 2$\times$2 max-pooling stride 2 BN  & ~  \\
\hline
3$\times$3 conv. 128 BN ReLU  & ~  & 3$\times$3 conv. BN LeakyReLU 192  &  ~  \\
~ & ~ & 1$\times$1 conv. BN LeakyReLU 192 & ~ \\
1$\times$1 conv. BN ReLU 10 & ~ & 1$\times$1 conv. BN LeakyReLU 10  &  ~ \\
\hline
global meanpool BN & ~ & global meanpool BN & ~ \\
\hline
Dense-BN 10 & ~ & ~ & ~\\
\hline
\multicolumn{4}{c}{10-way softmax} \\
\hline
\end{tabular}
\end{table}
\section{More results on Performance}
\subsection{Binary Case}
In addition to the comparison between DLLP and LLP-GAN, we investigate results of other two representative LLP solvers: InvCal and alter-$\propto$SVM. Because they are originally designed for binary problem, we randomly select two classes and merely conduct binary classification on all datasets with four algorithms. The comparison on test error rates is displayed in \reftab{tab2}. 
\begin{table}[hbtp]
\vspace{10pt}
\caption{Binary test error rates (\%) on benchmark datasets with different bag sizes.}
\label{tab2}
\centering
\begin{tabular}{cccccc}
  \hline
  \multirow{2}*{Dataset} & \multirow{2}*{Algorithm} & \multicolumn{4}{c}{Bag Size} \\
  \cline{3-6}
  ~ & ~ & 16 & 32 & 64 & 128 \\
  \hline
  \multirow{4}*{MNIST} & InvCal & 0.50 & 0.55 & 1.25 & 0.1 \\
  ~ & alter-pSVM & 0.20 & 0.20 & 0.25 & 0.2 \\
  ~ & DLLP & 0.049 & 0.049 & 0.049 & 0.049 \\
  ~ & LLP-GAN & \textbf{0.047} & \textbf{0.047} & \textbf{0.047} & \textbf{0.047} \\
  \hline
  \multirow{4}*{CIFAR-10} & InvCal & 28.95 & 29.16 & 26.47 & 31.84 \\
  ~ & alter-pSVM & 24 & 26.74 & 30.32 & 27.95 \\
  ~ & DLLP & 11.31 & 15.83 & 18.96 & 22.59 \\
  ~ & LLP-GAN & \textbf{1.39} & \textbf{1.61} & \textbf{11.59} & \textbf{18.29} \\
  \hline
  \multirow{4}*{SVHN} & InvCal & 11.55 & 13.35 & 12.95 & 12.70 \\
  ~ & alter-pSVM & 7.05 & 7.95 & 7.95 & 11.15 \\
  ~ & DLLP & \textbf{1.38} & \textbf{1.7} & 3.77 & 24.45 \\
  ~ & LLP-GAN & 1.49 & 1.8 & \textbf{3.46} & \textbf{9.23} \\
  \hline
\end{tabular}
\end{table}
\subsection{Multi-class Case}
We report multi-calss error rates with the standard deviations of DLLP and LLP-GAN on benchmark datasets in \reffig{result}. It is based on the results in Table 1 of our paper.
\begin{figure}[htbp]
\centering
\subfigure[MNIST]{
\begin{minipage}[t]{0.5\linewidth}
\centering
\includegraphics[width=5cm]{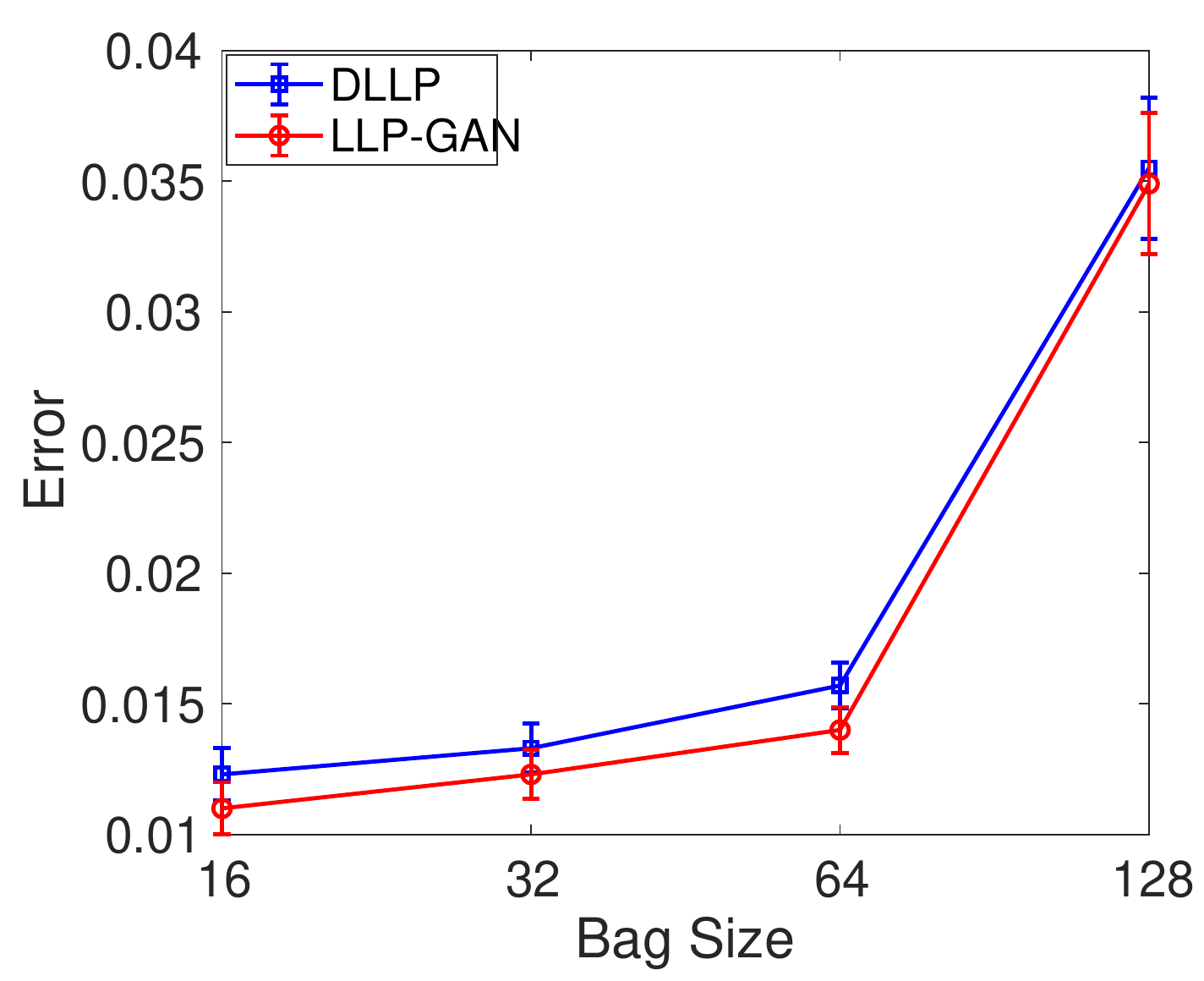}
\label{fig1}
\end{minipage}%
}%
\subfigure[SVHN]{
\begin{minipage}[t]{0.5\linewidth}
\centering
\includegraphics[width=5cm]{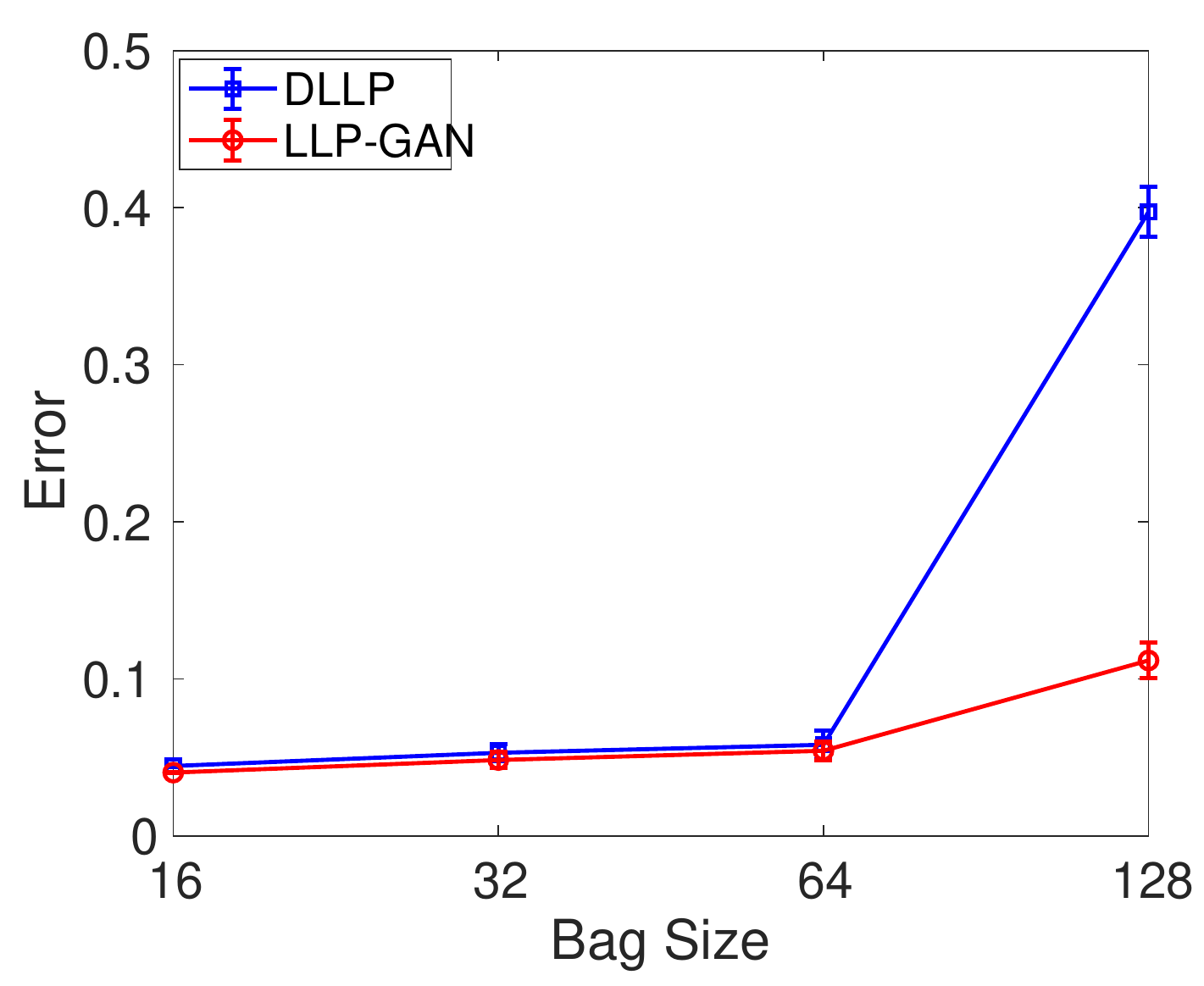}
\label{fig4}
\end{minipage}
}%

\subfigure[CIFAR-10]{
\begin{minipage}[t]{0.5\linewidth}
\centering
\includegraphics[width=5cm]{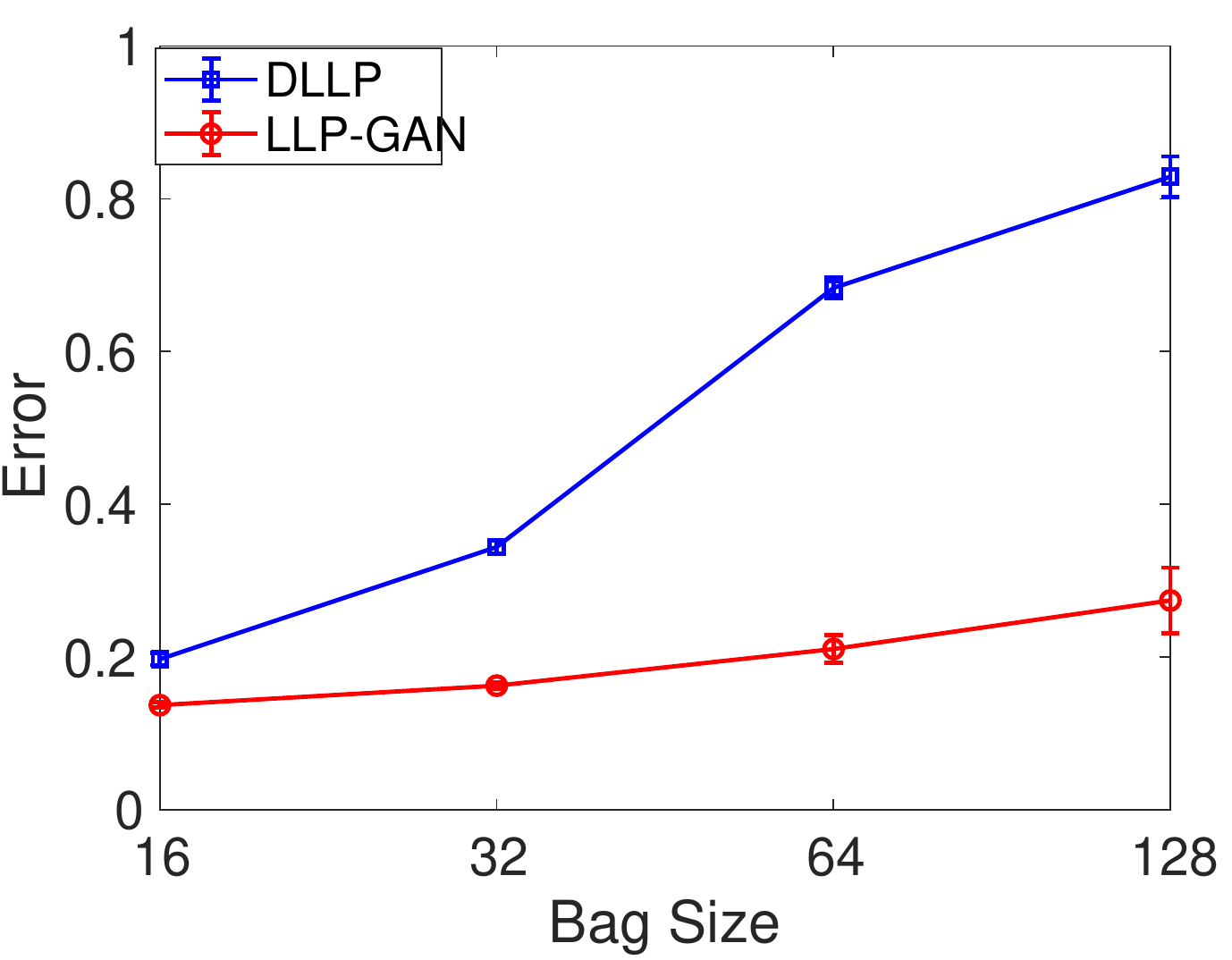}
\label{fig2}
\end{minipage}%
}%
\subfigure[CIFAR-100]{
\begin{minipage}[t]{0.5\linewidth}
\centering
\includegraphics[width=5cm]{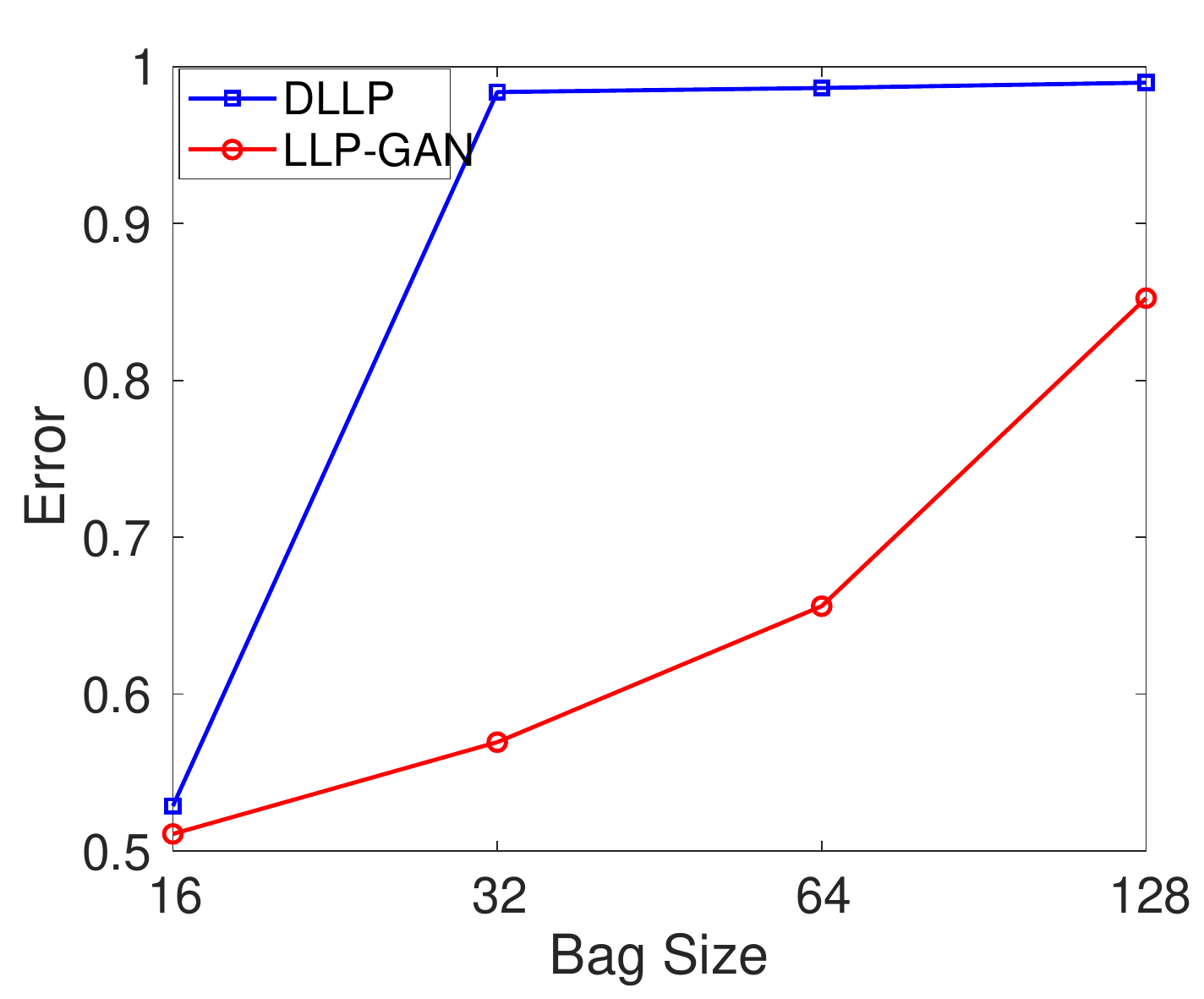}
\label{fig3}
\end{minipage}%
}%
\centering
\caption{Multi-class test error rates (\%) on benchmark datasets with different bag sizes.}
\label{result}
\end{figure}
\subsection{DLLP with Entropy Regularization}
Although DLLP with Entropy Regularization is a side contribution of our work, as claimed in the paper, we consider not to include it as a baseline. The reason is the experimental results suggest that the original DLLP has already converged to the solution with fairly low instance-level entropy, which makes the regularization term redundant. We demonstrate this statement in \reffig{fig5}.
\begin{figure}[htbp]
    \centering
    \includegraphics[width=0.5\textwidth]{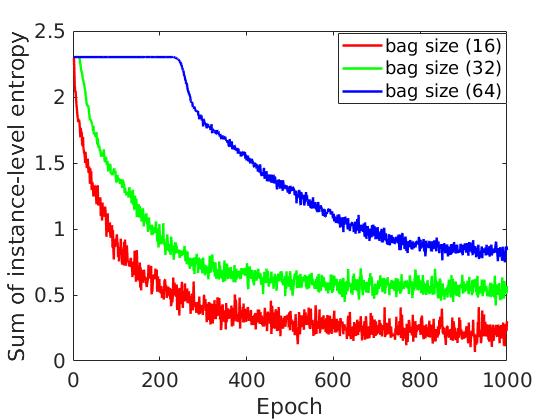}
    \caption{Sum of instance-level entropy on MNIST.}
    \label{fig5}
\end{figure}
\subsection{The Randomness of Bag Assignment}
The distribution of proportions has an huge impact on LLP algorithm performance. Hence, fixing bag size, we randomly construct bags for multiple times and present the accuracy performance in \reftab{tab3}. The result shows the stability of our method. Currently, we can only artificially build LLP datasets from supervised ones. However, the gap between the importance of LLP in real-life and lack of specific LLP datasets exactly suggests the meaning of our work: It is worthy of devoting efforts to further study in order to draw more attention from the community.
\begin{table}[htbp]
\label{tab3}
\
\caption{The performance on accuracy with deviation under multiple random bag generations on MNIST. (Due to the time limitation, \# of random are differently chosen.)}
    \centering
    \resizebox{0.65\textwidth}{13mm}{
    \begin{tabular}{c|ccc}
    \hline
      Bag Size & \multirow{2}*{\# of Errors} & Accuracy (\%) & Baseline \\
    (\# of Random) & & (Deviation) & (CNN) \\
    \hline
      16 (7) & 106 & 98.94 (0.0285) & \multirow{4}*{99.64}\\ 
      32 (22) & 124 & 98.76 (0.0542) & \\
      64 (45) & 147 & 98.53 (0.11) & \\
      128 (85) & 335 & 96.65 (0.4) & \\
    \hline
    \end{tabular}}
\end{table}
\newpage
\section*{Acknowledgements}
This work is supported by grants from: National Natural Science Foundation of China (No.61702099, 71731009, 61472390, 71932008, 91546201, and 71331005), Science and Technology Service Network Program of Chinese Academy of Sciences (STS Program, No.KFJ-STS-ZDTP-060), and the Fundamental Research Funds for the Central Universities in UIBE (No.CXTD10-05).
Bo Wang would like to acknowledge that this research was conducted during his visit at Texas A\&M University and thank Dr. Xia Hu for his hosting and insightful discussions.
\bibliographystyle{plain}
\bibliography{neurips_2019}
\end{document}